\newcommand{\tabincell}[2]{\begin{tabular}{@{}#1@{}}#2\end{tabular}}
\newtheorem{prop}{Proposition}
\begin{document}

	\title{Use Short Isometric Shapelets to Accelerate Binary Time Series Classification}
%	\title{Short Isometric Shapelet Transform for Binary Time Series Classification}
	
	%\titlerunning{Short form of title}        % if too long for running head
	
	\author{Weibo~Shu \and
		Yaqiang~Yao \and
		Shengfei~Lyu \and
		Jinlong~Li \and
		Huanhuan~Chen         
	}
	
	%\authorrunning{Short form of author list} % if too long for running head
	
	\institute{
		Weibo~Shu \at
		\email{weiboshu@mail.ustc.edu.cn}
		\and Yaqiang~Yao \at 
				\email{yaoyaq@mail.ustc.edu.cn}
		\and Shengfei~Lyu \at
				\email{saintfe@mail.ustc.edu.cn} 
		\and Jinlong~Li \at
				\email{jlli@ustc.edu.cn} 
		\and Huanhuan~Chen \at
			   \email{hchen@ustc.edu.cn} \\
	              University of Science and Technology of China, Hefei, Anhui, China \\
%	              Tel.: +123-45-678910\\
%	              Fax: +123-45-678910\\
	%             \emph{Present address:} of F. Author  %  if needed
}
	
	\date{Received: date / Accepted: date}
	% The correct dates will be entered by the editor

	\maketitle

% As a general rule, do not put math, special symbols or citations
% in the abstract or keywords.
\begin{abstract}
%Recently, Anthony et al. completed a comprehensive review about time series classification(TSC) problems. The result showed that ensemble algorithms such as COTE, ST, EE and so on represented state-of-the-art in accuracy term. But two problems existed, the first one is that the practical time consumption of state-of-the-art is to a certain extent intolerable, which makes barriers for practical promotion of these algorithms in application. The other is that none of those algorithms can prevail on all the data sets. Of course, It is not strange that no algorithm can perform well on all the data sets, but it is necessary to point out what data sets an proposed algorithm can be appropriate to deal with. For partly solving these two problems, we propose a fast new algorithm specific to binary classification problems and accordingly devise a gauging method which judge whether it is appropriate to use our algorithm on some data sets.
In the research area of time series classification, the ensemble
shapelet transform algorithm is one of state-of-the-art
algorithms for classification. 
However, its high time complexity is an issue to hinder its application 
since its base classifier shapelet transform includes a high time complexity of a distance calculation and  shapelet selection. 
Therefore, in this paper we introduce a novel algorithm, i.e. short isometric shapelet transform, 
which contains two  strategies to reduce the time complexity. 
The first strategy of SIST fixes the length of shapelet based on a simplified distance calculation, 
which largely reduces the number of shapelet candidates as well as speeds up the distance calculation in the ensemble shapelet transform algorithm.
The second strategy is to train a single linear
classifier in the feature space instead of an ensemble classifier. 
The theoretical evidences of these two strategies are presented to guarantee a
near-lossless accuracy under some preconditions while reducing the
time complexity. 
Furthermore, empirical experiments demonstrate the superior performance of
the proposed algorithm.
\keywords{Time Series Classification \and Feature Selection \and Feature Space \and Machine Learning}
\end{abstract}

\section{Introduction}

Time series are one type of multidimensional data
with a wide range of applications in economy, medical treatments,
and engineering \cite{chen2013learning,gong2018sequential}. The
ordered values in time series contain abundant latent information of
data, which makes time series analysis a challenging task. Time
series classification (TSC) plays a significant role in time series
analysis\cite{chen2013model,yang2017granger}. It appears in a number
of new scenarios such as text
retrieval \cite{Apostolico2002MonotonyOS}, fault diagnosis
\cite{quevedo2014combining,chen2014cognitive}, and bioinformatics
\cite{Gionis2003FindingRS}. 

Many algorithms were proposed to tackle TSC problems during the last decades.
The existing algorithms of TSC can be roughly categorized into the
following five categories \cite{Bagnall2016TheGT}. 
The first category is based on the similarity between two  time
series. Several bespoke distance metrics between pairwise time
series were designed to measure the similarity between  two
time series. Further, the classification relies on these
specific distance metrics
\cite{Mei2016LearningAM,Rakthanmanon2013AddressingBD,Jeong2011WeightedDT,Marteau2009TimeWE,Stefan2013TheMM,Grecki2014NonisometricTI,Batista2013CIDAE,Grecki2012UsingDI,Lines2014TimeSC}.
In contrast to the algorithms based on whole series, the second category 
focuses on extracting local features of time series to classify them
\cite{Roychoudhury2017CostST,Deng2013ATS,Baydogan2013ABF,Baydogan2015TimeSR,Ye2010TimeSS,Keogh2013FastSA,Grabocka2014LearningTS}. 
The third category transforms 
%There is a novel class of algorithms whose main idea is transforming
time series into strings by a uniform dictionary and then measures 
the similarity among these strings
\cite{Lin2007ExperiencingSA,Schfer2017FastAA,Lin2012RotationinvariantSI,Senin2013SAXVSMIT,Schfer2014TheBI}.
The fourth category is model-based algorithms that represent time series with models and then
conduct classification on these models
\cite{Smyth1996ClusteringSW,Corduas2013ClusteringAC,chen2015model,li2019short,gong2018sequential,Bagnall2014ARL}.
The last category  tries to transform source data
into a new feature space and then classify them in the new feature
space
\cite{Kate2015UsingDT,Hills2013ClassificationOT,Bostrom2015BinaryST,Bagnall2015TimeSeriesCW,Lines2016HIVECOTETH,Sharabiani2017EfficientCO}.
Based on the aforementioned algorithms as base classifiers, 
ensemble algorithms \cite{Lines2014TimeSC,Deng2013ATS,Schfer2014TheBI,Lines2016HIVECOTETH} can be built 
to generate a committee for possible better performance \cite{chen2009regularized,chen2010multiobjective,chen2009predictive}, 
since they try to employ accurate yet diverse multiple. 

These various algorithms are often related with each other. 
In the second category of algorithms, shapelets  proposed in \cite{Ye2010TimeSS} are  subseries with informative features, and they can be used 
to build a decision tree where the attribute of each  node is
the distance between time series and a selected shapelet. 
Based on informative shapelets, shapelet transform (ST) constructs a feature space and carries out a classification in this feature space \cite{Hills2013ClassificationOT}.
Finally, the ensemble shapelet transform  algorithm achieves the state-of-the-art in terms of classification 
accuracy by including a number of base classifiers -- shapelet transform \cite{Bagnall2016TheGT,Hills2013ClassificationOT,Bostrom2015BinaryST}.

However,  the high  computational complexity of the shapelet transform  is  severe  to hinder the application of its ensemble algorithms.
On  one hand, in the base classifier, a basic manipulation is calculating the distance between a shapelet and a time series. 
%In most cases, shapelets are not as long as time series. 
The distance between a shapelet denoted by $\mathbf{s}$ and a time series denoted by $\mathbf{x}$ is  calculated by the following formula \cite{Ye2010TimeSS}:
\begin{equation}
	Dist(\mathbf{s},\mathbf{x})=\min \left\lbrace d(\mathbf{s},P) \Big |
	\begin{array}{l}
		P=(x_i,\cdots,x_{i+k-1}), \\
		i=1,\cdots,m-k+1
	\end{array}
	\right\rbrace
	\label{EQ 1}
\end{equation}
where $d(\cdot,\cdot)$ is the function of Euclidean Distance, $P$ is a continuous subsequence, $m$ is the length of time series $\mathbf{x}$, and $k$ is the length of shapelet $\mathbf{s}$. This formula also serves as the distance calculation formula between shapelet candidates  and time series. And the time complexity of a distance calculation is $\mathcal{O}(k(m-k))$, a quadratic item of the length of shapelet.
On the other hand, shapelets are selected from shapelets candidates which are all the subseries of all the time series. 
For a $n$-size and $m$-length time series set, there are $\frac{nm(1+m)}{2} $  candidates in the shapelet transform algorithm.

%However, this algorithm often includes more than necessary
%members, and then some redundancy causes a high time complexity.
%This is a severe issue to hinder its application.
%Some variants have been proposed to tackle this challenge. But most 
%of them might downgrade the accuracy while reducing the time complexity.

To tackle this issue, 
 the short isometric shapelet transform (SIST) algorithm is
proposed to reduce the time complexity of the ensemble shapelet transform algorithm in this paper.
First, SIST fixes the length of shapelet based on a simplified distance calculation, 
which largely reduces the number of shapelet candidates as well as speeds up the distance calculation in the ensemble shapelet transform algorithm.
In addition, SIST substitutes the ensemble classifier with a single classifier in the shapelet-based feature space,
which will sharply decrease training time of the classifier. 
The theoretical evidences are given to demonstrate that the two strategies can guarantee the reduction of time complexity with the near-lossless
accuracy. 
Furthermore, SIST empirically achieves a promising accuracy on most of binary data
sets, which shows its effectiveness.

The rest of the paper is organized as follows. The background and
outline of SIST are introduced in Section
\ref{section:background}. Theoretical analysis about the proposed
strategies in SIST is presented in Section \ref{section:theory}. Section
\ref{section:experiment} shows empirical results of the proposed algorithm
and Section \ref{section:conclusion} concludes this paper.

\section{Short Isometric Shapelet Transform}
\label{section:background}
\subsection{Time Shapelet}
\label{subsection:2-1}
Firstly, two basic terminologies are given as follows:
\begin{description}[style=nextline]
 \item[time series data] One sample of  time series  consists of an  ordered  multi-dimensional value and  a corresponding class label. It is normally expressed with $(\mathbf{x},y)$, where $\mathbf{x}$ is an vector  and $y$ is the class label. The whole time series are denoted by $(\mathbf{X},\mathbf{Y})=((\mathbf{x}_1,y_1),(\mathbf{x}_2,y_2),\cdots,(\mathbf{x}_n,y_n))$. In general, time series in a data set are isometric after the preliminary processing.
 \item[shapelet] The shapelet is the discriminative subseries derived from time series \cite{Ye2010TimeSS}. Subseries are continuous subsequences in time series. A shapelet is expressed with $(\mathbf{s},z)$, where $\mathbf{s}$ is the subseries and $z$ is the class label from its original time series. In a shapelet-based algorithm, each shapelet usually represents a local feature.
\end{description}

Shapelets are selected from the candidates that comprise every 
subseries of all the time series. 
%For a shapelet candidate, distances from it to each time series of a target set are calculated. 
There are four steps to choose informative shaplets. 
In the first step, for each shapelet candidate, 
distances from it to all the time series are calculated. 
After that, a prime segmentation is explored on the real number axis where all the calculated distances are marked. 
Then by classifying time series according to the divided segments which their corresponding distances fall in, 
time series are divided into some disjoint subsets. 
Subsequently, the information gain is calculated according to the division. 
Finally, shapelet candidates with better information gain are selected as shapelets.

In the first step,  
pairwise distances are calculated among the candidates.
Then, a prime segmentation is explored on the real
number axis where all the calculated distances are marked in the second step.
Some disjoint subsets of the candidates are obtained by segmenting them in the third step.
%Then by classifying time series
%according to the divided segments which their corresponding
%distances fall in, time series in the target set are divided into
%some disjoint subsets. 
Finally, the information gain is
calculated according to these disjoint subsets and those with better information gain are selected as shapelets.

Due to the substantial time spent in selecting shapelets
by brute force, Ye and Keogh \cite{Ye2010TimeSS} exploited some tricks, for example early
abandon, to reduce the time consumption.
Several methods  also tried  to solve this time consumption
problem
\cite{Keogh2013FastSA,Grabocka2014LearningTS,Hou2016EfficientLO}.
Although to a certain extent time complexity has been reduced, there
is some loss of accuracy.

\subsection{Shapelet Transform (ST)}
\label{subsection:2-3}

It uses a disposable selection to select $k$ shapelets from all the shapelet candidates. Then it transforms original time series into a $k$-dimensional vector where the value of the $i$th dimension is the distance between the $i$th selected shapelet and time series. 

This method exploits shapelets in a different way from constructing a decision tree. These shapelets are selected only once in this method, whereas the frequency of the shapelets selection equals the number of branch nodes of the decision tree in the algorithm of Ye and Keogh \cite{Ye2010TimeSS}. 
Hence, after this work, the key point of cutting time consumption are turning to selecting the top $k$ most discriminative shapelet candidates as fast as possible \cite{Guido2018FusingTF,Bostrom2016EvaluatingIT}. 
However, the time complexity is still high as a result of the large number of shapelet candidates as well as the complex distance calculation.

%To reduce the high time complexity, the short isometric shapelet transform (SIST) model is proposed in this paper and it will be introduced next.

\subsection{Short Isometric Shapelet Transform (SIST)}
\label{subsection:approach}
For clearly elucidating improvements by a comparison,  SIST  will be introduced with the ensemble ST algorithm (Fig. \ref{Fig 2}).

For a TSC problem, the first step of the ensemble ST algorithm is to extract all the subseries of each time series as shapelet candidates. But in  SIST, only the subseries with a small fixed length will be extracted from each time series. Therefore, the number of shapelet candidates in  SIST  becomes fairly small compared with it in the ensemble ST algorithm. The validity of this strategy will be elaborated later. Step 1 of Fig. \ref{Fig 2} illustrates the difference.

\begin{figure*}[!t]
\centering
\includegraphics[width=\textwidth]{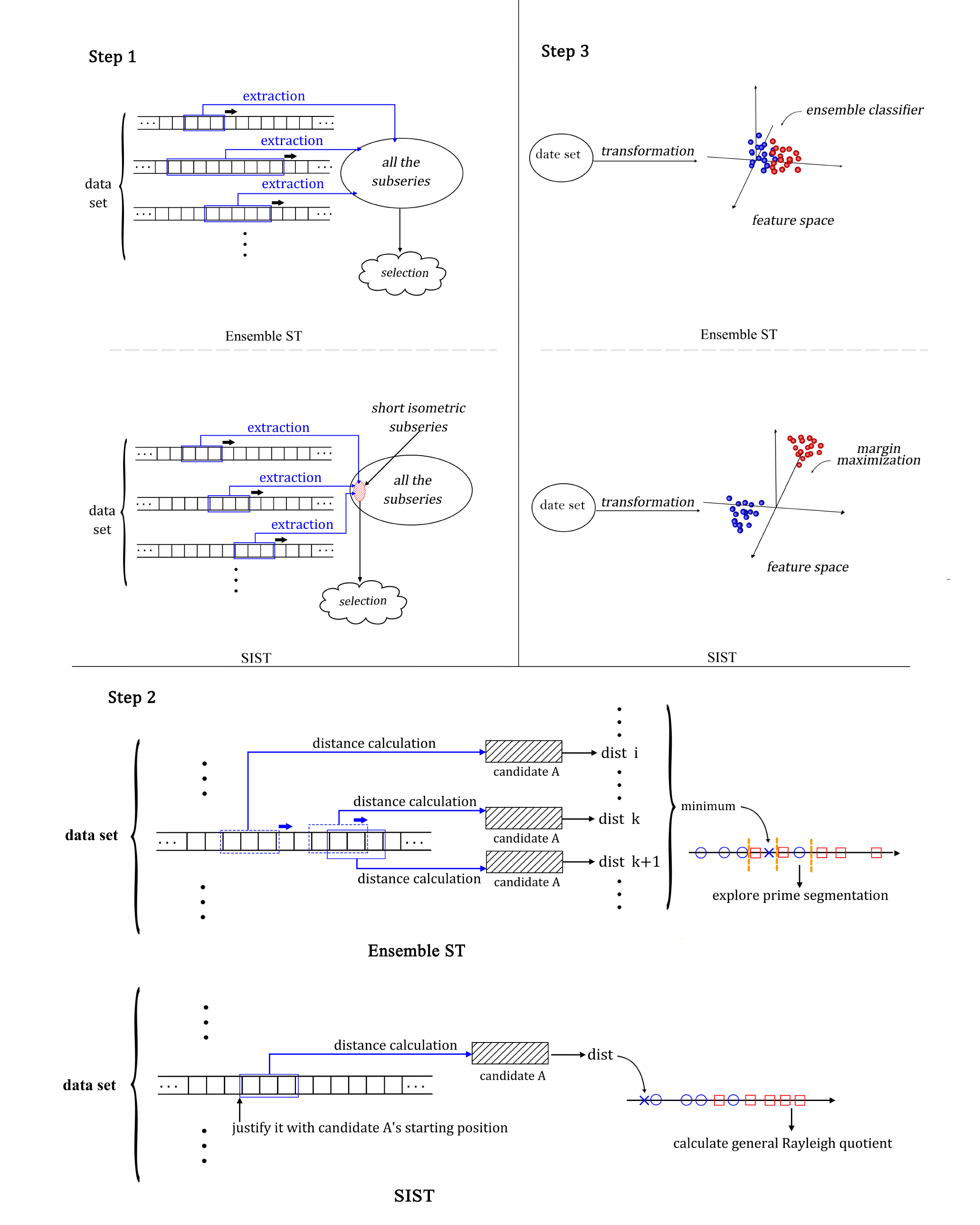}
\caption{Differences between the ensemble ST and  SIST. Details of this figure are discussed in Subsection \ref{subsection:approach}}
\label{Fig 2}
\end{figure*}

The second step is to select shapelets from shapelet candidates. According to the elaboration in Subsection \ref{subsection:2-1}, a process of exploring the prime segmentation in the real axis is indispensable in this step. But in  SIST, this time-consuming process is removed by using the generalized Rayleigh quotient rather than the prime information gain as the discrimination index of shapelet candidates. Besides, the position of shapelet candidates in its original time series is exploited to simplify the distance calculation. Moreover, these improvements will be justified later. And all these differences are illustrated in step 2 of Fig. \ref{Fig 2}.

In the ensemble ST algorithm, the final step is to transform source time series into vectors by selected shapelets (the transformation is described in \ref{subsection:2-3}) and then train an ensemble classifier in that vector space. But we claim that if shapelets are selected by using the generalized Rayleigh quotient of shapelet candidates as the selection priority, there is a high degree of  the linear separability of the transformed data in the vector space. Consequently, only a single SVM is trained to classify data in  SIST. Theoretical evidences to support this claim will be given later. Step 3 of Fig. \ref{Fig 2} shows the difference in the final step. Furthermore, since shapelets are short and distance calculation is simplified in  SIST, much time is saved in transforming time series into the feature space.

\section{Theoretical Analysis}
\label{section:theory}
%In this section, theoretical evidences of the proposed strategies in SIST will be elaborated. We reiterate the two strategies in SIST here. The first one is to  keep shapelet candidates with a small fixed length. This strategy will largely reduce the number of shaplet candidates. The second one is to replace the ensemble classifier with a single SVM in the feature space, which will sharply cut the time consumption of the classifier training.
%The theoretical evidences in this section can guarantee as small as possible loss of accuracy compared with largely reduced time complexity.
%
%The contents of this section are organized as follows. Firstly, the shapelet candidate extraction based on the proposed `Fixed Distance' will be analyzed in Subsection \ref{subsection:3-1} and \ref{subsection:3-2}. This part is about the theoretical evidences of the first proposed strategy. Subsequently, the rationality of substituting the ensemble classifier with a single linear classifier will be discussed in \ref{subsection:3-3}. Finally, the algorithmic flow with the time complexity analysis is given in Subsection \ref{subsection:3-4}.
In this section, the theoretical evidence of the two proposed strategies in  SIST will be elaborated. 
It can guarantee as small loss of accuracy as possible and reduce significantly time complexity. 
The content of this section is organized as follows. 
Firstly, the shapelet candidate extraction based on the proposed `Fixed Distance’ will be analyzed in Subsection \ref{subsection:3-1} and \ref{subsection:3-2}. 
This part explains the feasibility of the first strategy that uses only the shapelet candidates with a small fixed length. 
This strategy will save time by largely reducing the number of shaplet candidates. 
Subsequently, the theoretical supports of the second strategy that substitutes the ensemble classifier with a single linear classifier, 
will be discussed in Subsection \ref{subsection:3-3}. 
This strategy will sharply cut the time consumption of the classifier training in the feature space. 
Finally, the algorithmic flow with the time complexity analysis is given in Subsection \ref{subsection:3-4}.

\subsection{Simplification of the Distance Calculation}
\label{subsection:3-1}
In typical shapelet-based algorithms, it is time-consuming to measure the distance between shapelets and time series. The high time complexity of one distance calculation is drastically amplified by the frequency of the distance calculation.

Hence a basic idea in this paper is to use the position information of shapelets to simplify the distance calculation shown in Eq. (\ref{EQ 1}). Each shapelet or shapelet candidate is a subseries derived from an original time series in the data set. Therefore, its start point is a specific position in its original time series. The starting position can be used to simplify the distance calculation as the following `Fixed Distance' shows.

\begin{equation}
Fixed\_Dist(\mathbf{s},\mathbf{x})= \left\lbrace d(\mathbf{s},P)  \Big |
\begin{array}{c}
P=(x_i,\cdots,x_{i+k-1}),\\
i=j
\end{array}
\right\rbrace
\label{EQ 2}
\end{equation}
where $j$ is the starting position of shapelet $\mathbf{s}$ and other parameters are same with them in Eq. (\ref{EQ 1}). It also serves as the distance calculation formula of shapelet candidates and time series. 
Based on Eqs. (\ref{EQ 1}-\ref{EQ 2}), the time complexity of a distance calculation has been changed from $\mathcal{O}(k(m-k))$ to $\mathcal{O}(k)$.

Both the information the local feature brings and the position where the local feature appears are focused on Fixed Distance in Eq. (\ref{EQ 2}). 
Instead, only the former is concentrated on the  typical distance metric in Eq. (\ref{EQ 1}).
Moreover, if distances between shapelets and time series are calculated by Eq. (\ref{EQ 2}), much time is saved. One part of the saved time is evident from the comparison between Eq. (\ref{EQ 1}) and Eq. (\ref{EQ 2}). The other part is indirect, and details will be elaborated in Subsection \ref{subsection:3-2}. In that subsection, Fixed Distance plays an important role.

There is a simple example illustrating that Eq. (\ref{EQ 2}) reinforces the discrimination ability of shapelets. Data $A$ and $B$ are shown in Fig. \ref{Fig 3}. If the first half of Data $A$ is selected as a shapelet $S$, then by Eq. (\ref{EQ 1}), the distance between $S$ and $A$ is $0$ and the distance between $S$ and $B$ is also $0$. However, if distances are calculated by Eq. (\ref{EQ 2}), then the distance between $S$ and $A$ is $0$ and the distance between $S$ and $B$ is $\sqrt{2}$. Hence, $S$ can classify $A$ and $B$ by Eq. (\ref{EQ 2}) whereas it can not do that by Eq. (\ref{EQ 1}). That is because in Eq. (\ref{EQ 2}), the distance calculation takes into account the position of the local feature represented by the shapelet. In Eq. (\ref{EQ 2}), the distance calculation is fixed at the specific position marked by the start point of the shapelet. That's why it is called `Fixed Distance'.

\begin{figure}[!t]
\centering
\includegraphics[width=4.5in]{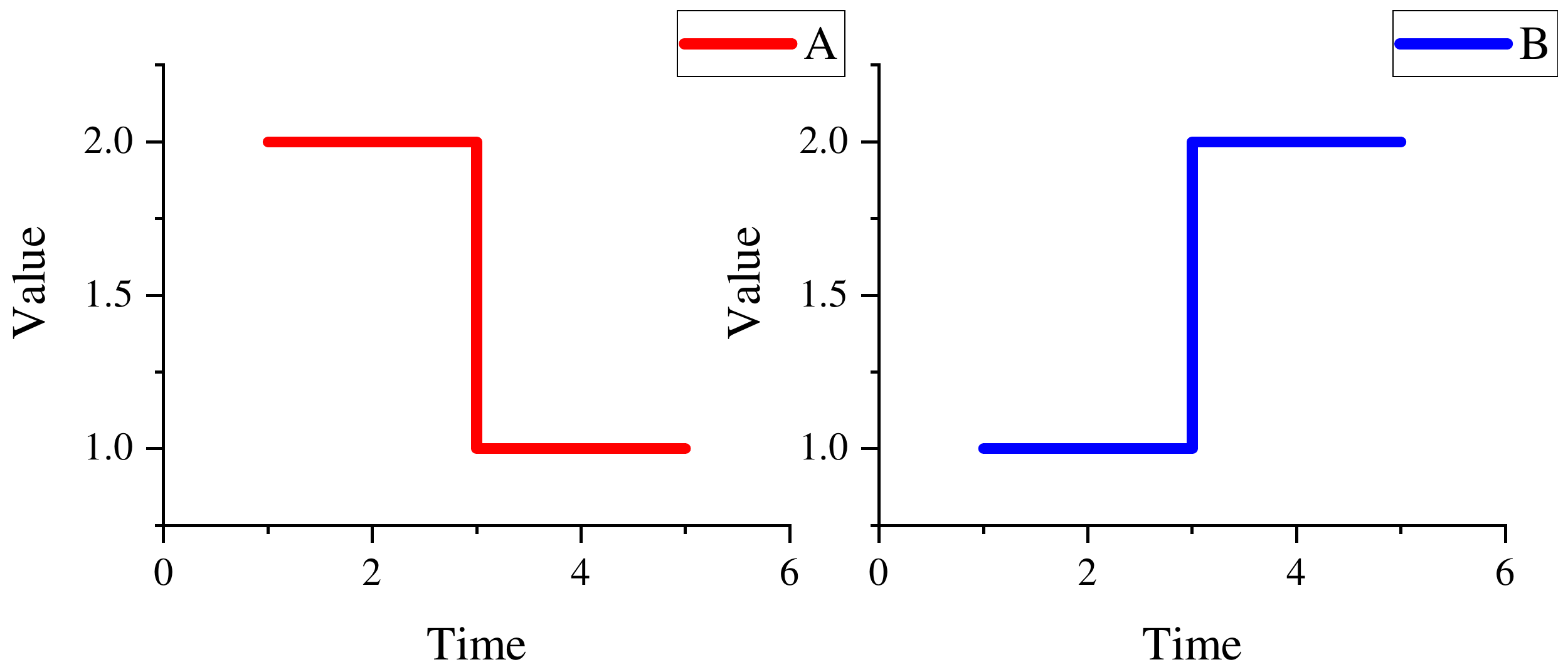}
\caption{Example of the usage of position information. The first half of class A can distinguish A and B by Eq. (\ref{EQ 2}), whereas it can not do that by Eq. (\ref{EQ 1}).}
\label{Fig 3}
\end{figure}

\subsection{Short Isometric Shapelet Candidates}
\label{subsection:3-2}
In this subsection, theoretical analysis focuses on fixing the length of shapelet candidates, namely the strategy showed in Step 1 of Fig. \ref{Fig 2}.

In the process of selecting appropriate shapelets, the time of evaluating a single shapelet candidate is multiplied by the number of all the shapelet candidates. And the number of shapelet candidates is so substantial that the time consumption is huge. If it can be known about the prime length or even a small range of the prime length of shapelets, then shapelet candidates whose length are not the prime one can be discarded. And the huge time consumption vanishes as a result of the drastic reduction of the number of shapelet candidates. Generally speaking, it is hard to know the prime length, or the time spent in searching the best length is not less than the time it saves. However, in  SIST, under the precondition that Fixed Distance is used as the distance calculation formula, the length of shapelet candidates can actually be fixed as a small number  without an appreciation of the prime length. To explain this conclusion, firstly some conceptions should be strictly defined as follows:
~\\

\begin{definition}
\emph{Given a $n$-size set $A$ of time series and a $k$-size set $B$ of shapelets, the process of transforming each time series in A into a $k$-dimensional vector by metric $d$, which means the value at the $i$th dimension of the $k$-dimensional vector is the $d$ metric from the time series to the $i$th shapelet in $B$, is called \textbf{$A$'s shapelet transform through $B$ by metric $d$}.}
\label{defi 1}
\end{definition}
~\\

\begin{definition}
\emph{Given a shapelet $(\mathbf{s},z)$, cut it in arbitrary $n-1$ cut points to produce $n$ ($n>1$) subseries of $(\mathbf{s},z)$. Each subseries is a new shapelet derived from $(\mathbf{s},z)$ and all of them constitute $(\mathbf{s},z)$. The set of the $n$ new shapelets is called the \textbf{$n$-cut set} of $(\mathbf{s},z)$.}
\end{definition}
~\\

\begin{figure}[!t]
	\centering
	\includegraphics[width=0.5\linewidth]{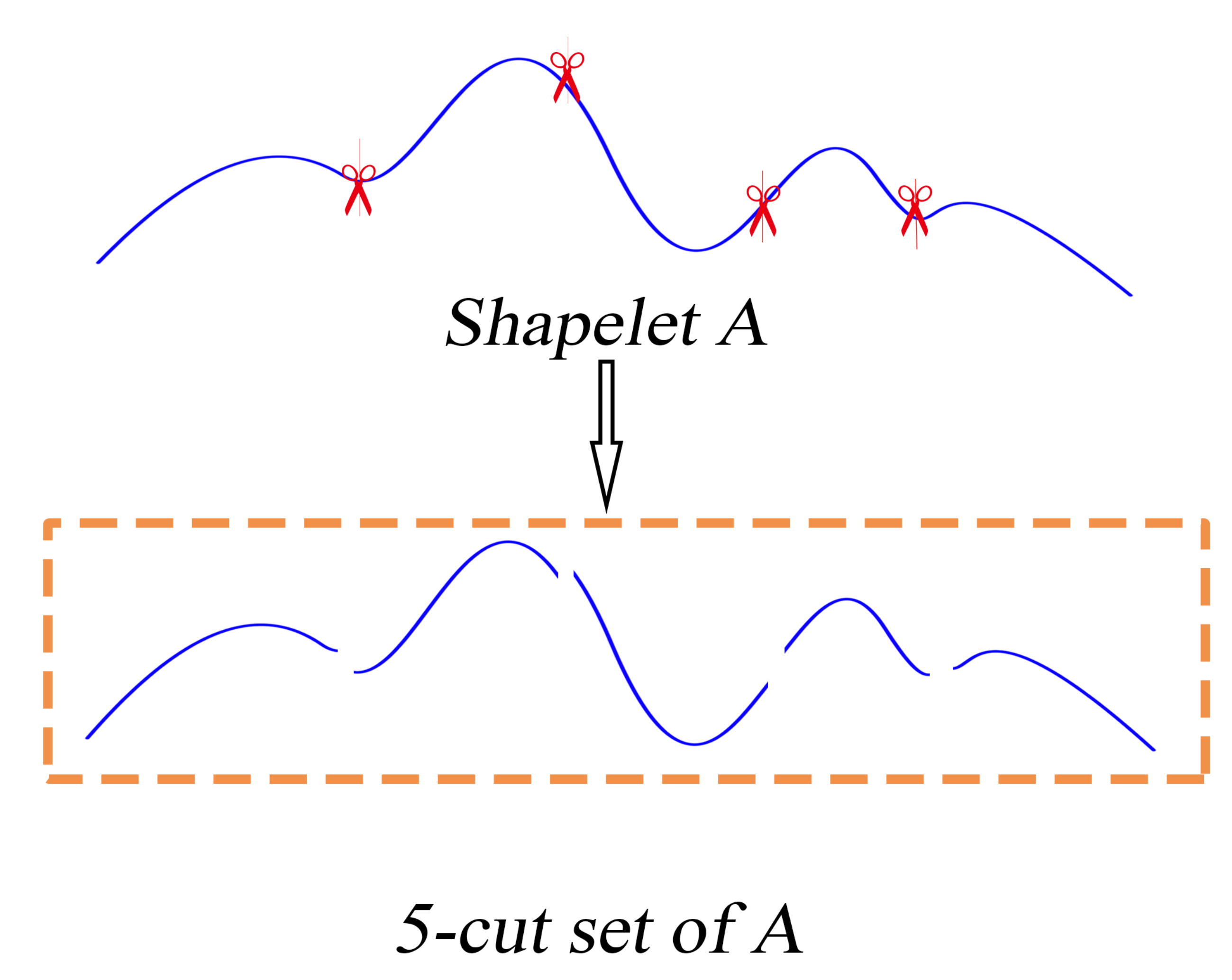}
	\caption{Example of $n$-cut set.}
	\label{Fig 5}
\end{figure}

A simple example of $n$-cut set is shown in Fig. \ref{Fig 5}. Based on the aforementioned conceptions, a key proposition is given as follows:

~\\
\begin{prop}
\emph{$A$ is a vector set coming from a time series set $B$'s shapelet transform through a single shapelet $(\mathbf{s},z)$ by Fixed Distance, and $C$ is another vector set coming from $B$'s shapelet transform through $D$ by Fixed Distance too, here $D$ is an arbitrary $n$-cut set of $(\mathbf{s},z)$ with any $n>1$. Then for any time series in $B$, its corresponding vector in $A$ has the same Euclidean norm with its corresponding vector in $C$.}
\label{prop:1}
\end{prop}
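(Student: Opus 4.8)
The plan is to reduce the statement to the Pythagorean decomposition of a vector into orthogonal coordinate blocks, using the one feature of Fixed Distance that makes it behave well under cutting: the starting position recorded with a shapelet pins down exactly which window of the target series it is compared against.

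First I would fix notation. Write $\mathbf{s}=(s_1,\dots,s_k)$ and let $j$ be the starting position of $(\mathbf{s},z)$ in its original time series, so that for any $\mathbf{x}=(x_1,\dots,x_m)$ in $B$,
\[
Fixed\_Dist(\mathbf{s},\mathbf{x})^2=\sum_{t=1}^{k}\bigl(s_t-x_{j+t-1}\bigr)^2 .
\]
An $n$-cut set $D=\{(\mathbf{s}^{(1)},z),\dots,(\mathbf{s}^{(n)},z)\}$ is obtained from $n-1$ cut points, which partition the index set $\{1,\dots,k\}$ into consecutive blocks $I_1,\dots,I_n$ with $\mathbf{s}^{(l)}=(s_t)_{t\in I_l}$. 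The key observation, which I would record explicitly, is that $\mathbf{s}^{(l)}$ sits at offset $\min(I_l)-1$ inside $\mathbf{s}$, hence its starting position in the original time series is $j+\min(I_l)-1$; therefore, directly from the definition of Fixed Distance,
\[
Fixed\_Dist(\mathbf{s}^{(l)},\mathbf{x})^2=\sum_{t\in I_l}\bigl(s_t-x_{j+t-1}\bigr)^2 .
\]

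Then I would assemble the pieces. By Definition \ref{defi 1}, the vector of $\mathbf{x}$ in $A$ is the one-dimensional vector $\bigl(Fixed\_Dist(\mathbf{s},\mathbf{x})\bigr)$, with squared Euclidean norm $Fixed\_Dist(\mathbf{s},\mathbf{x})^2$, while the vector of $\mathbf{x}$ in $C$ is the $n$-dimensional vector $\bigl(Fixed\_Dist(\mathbf{s}^{(1)},\mathbf{x}),\dots,Fixed\_Dist(\mathbf{s}^{(n)},\mathbf{x})\bigr)$, with squared Euclidean norm $\sum_{l=1}^{n}Fixed\_Dist(\mathbf{s}^{(l)},\mathbf{x})^2$. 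Since $I_1,\dots,I_n$ partition $\{1,\dots,k\}$, summing the per-block identities above yields $\sum_{l=1}^{n}\sum_{t\in I_l}(s_t-x_{j+t-1})^2=\sum_{t=1}^{k}(s_t-x_{j+t-1})^2$, so the two squared norms agree; taking square roots gives the claim, and since $\mathbf{x}$ was arbitrary it holds for every time series in $B$.

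There is no genuine analytic difficulty: the substance of the argument is entirely in the index bookkeeping. The one thing that must be checked carefully — and the reason the analogous statement is false for the sliding-window distance of Eq. (\ref{EQ 1}) — is that the window of $\mathbf{x}$ aligned with $\mathbf{s}^{(l)}$ under Fixed Distance is precisely the sub-window of the $\mathbf{s}$-versus-$\mathbf{x}$ alignment indexed by $I_l$, with consecutive blocks meeting with neither overlap nor gap; this also uses implicitly that the time series in $B$ are long enough for all the relevant windows to be defined, which is guaranteed because $\mathbf{s}$ comes from an isometric data set. Once that indexing is made precise, the proof is just additivity of sums of squares.
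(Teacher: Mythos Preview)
Your proof is correct and follows essentially the same route as the paper's: both arguments amount to writing out $Fixed\_Dist(\mathbf{s},\mathbf{x})^2$ and each $Fixed\_Dist(\mathbf{s}^{(l)},\mathbf{x})^2$ explicitly, observing that under Fixed Distance the sub-shapelet is aligned to the matching sub-window of $\mathbf{x}$, and then using that the blocks partition $\{1,\dots,k\}$ so the per-block sums of squares add up to the full sum. You are slightly more explicit than the paper about why the starting position of $\mathbf{s}^{(l)}$ is $j+\min(I_l)-1$, but the substance is identical.
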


\begin{proof}
For any time series $(\mathbf{x},y)$ in set $B$, it's needed to prove that its shapelet transform through $(\mathbf{s},z)$ by Fixed Distance has the same Euclidean norm with its shapelet transform through any $n$-cut set $D$ of $(\mathbf{s},z)$ by Fixed Distance.

The following is the representation of the time series $\mathbf{x}$, the shapelet $\mathbf{s}$, and the $n$-cut set $D$.
\begin{subequations}
\begin{gather}
\mathbf{x}=(x_1,x_2,\cdots,x_m) \\
\mathbf{s}=(s_i,s_{i+1},\cdots,s_{i+k}) \quad 1\leq i \leq m,\  i+k\leq m \\
D=\{ \mathbf{s}_1,\mathbf{s}_2,\cdots,\mathbf{s}_n \}  \quad  1\leq n \leq k+1\\
\begin{array}{c}
t_j={\rm the \ terminal \ position \ of} \ \mathbf{s}_j    \\ 1\leq j \leq n ,\  i\leq t_1 < t_2 < \cdots < t_n = i+k
\end{array}
\\
\mathbf{s}_1=(s_i,s_{i+1},\cdots,s_{t_1}) \\
\mathbf{s}_h=(s_{t_{h-1}+1},s_{t_{h-1}+2},\cdots,s_{t_h}) \quad 2\leq h \leq n
\end{gather}
\end{subequations}

Next formulas represent the vectors transformed through $\mathbf{s}$ and $D$ respectively by Fixed Distance.
\begin{equation}
V_{\mathbf{x}/\mathbf{s}}=(\sqrt{\sum_{p=i}^{i+k} {(x_p-s_p)^2}})
\end{equation}
\begin{equation}
\begin{split}
V_{\mathbf{x}/D}=(\sqrt{\sum_{p=i}^{t_1} {(x_p-s_p)^2}},\sqrt{\sum_{p=t_1+1}^{t_2} {(x_p-s_p)^2}},\cdots, \\ \sqrt{\sum_{p=t_{n-1}+1}^{t_n} {(x_p-s_p)^2}} )
\end{split}
\end{equation}

The next step is to prove that these two vectors have the same Euclidean norm. Notice that if all the shapelets in $D$ are concatenated in sequence then the shapelet $\mathbf{s}$ is acquired, hence,
\begin{equation}
\begin{split}
||V_{\mathbf{x}/D}||_2 &=\sqrt{ \sum_{p=i}^{t_1} {(x_p-s_p)^2}+\sum_{q=2}^{n}{\sum_{p=t_{q-1}+1}^{t_q}{(x_p-s_p)^2}} } \\
                       &=\sqrt{ \sum_{p=i}^{t_n}{(x_p-s_p)^2}  }\\
  & =\sqrt{ \sum_{p=i}^{i+k}{(x_p-s_p)^2} } \\
 & = ||V_{\mathbf{x}/\mathbf{s}}||_2
\end{split}
\end{equation}

And this is exactly what is needed.
\end{proof}
~\\

According to the shaplet transform defined in Definition \ref{defi 1}, the word `norm' in the Proposition \ref{prop:1} actually means the similarity measurement between the time series and the shapelets representing a local feature set of the class `$z$'. Therefore, in Proposition \ref{prop:1}, for a very discriminative shapelet $(\mathbf{s},z)$, the Euclidean norm of a corresponding vector of the time series whose class label is `$z$' should be as small as possible, and it of the other vectors should be as large as possible (Here, the small value of similarity measurement means a high similarity and the big one means a low similarity). Hence, this proposition actually ensures that the similarity measurement is an invariant under those two different shapelet transform processes. An intuitive and simple example is shown in Fig. \ref{Fig 6}.

\begin{figure}[!t]
\centering
\includegraphics[width=\linewidth]{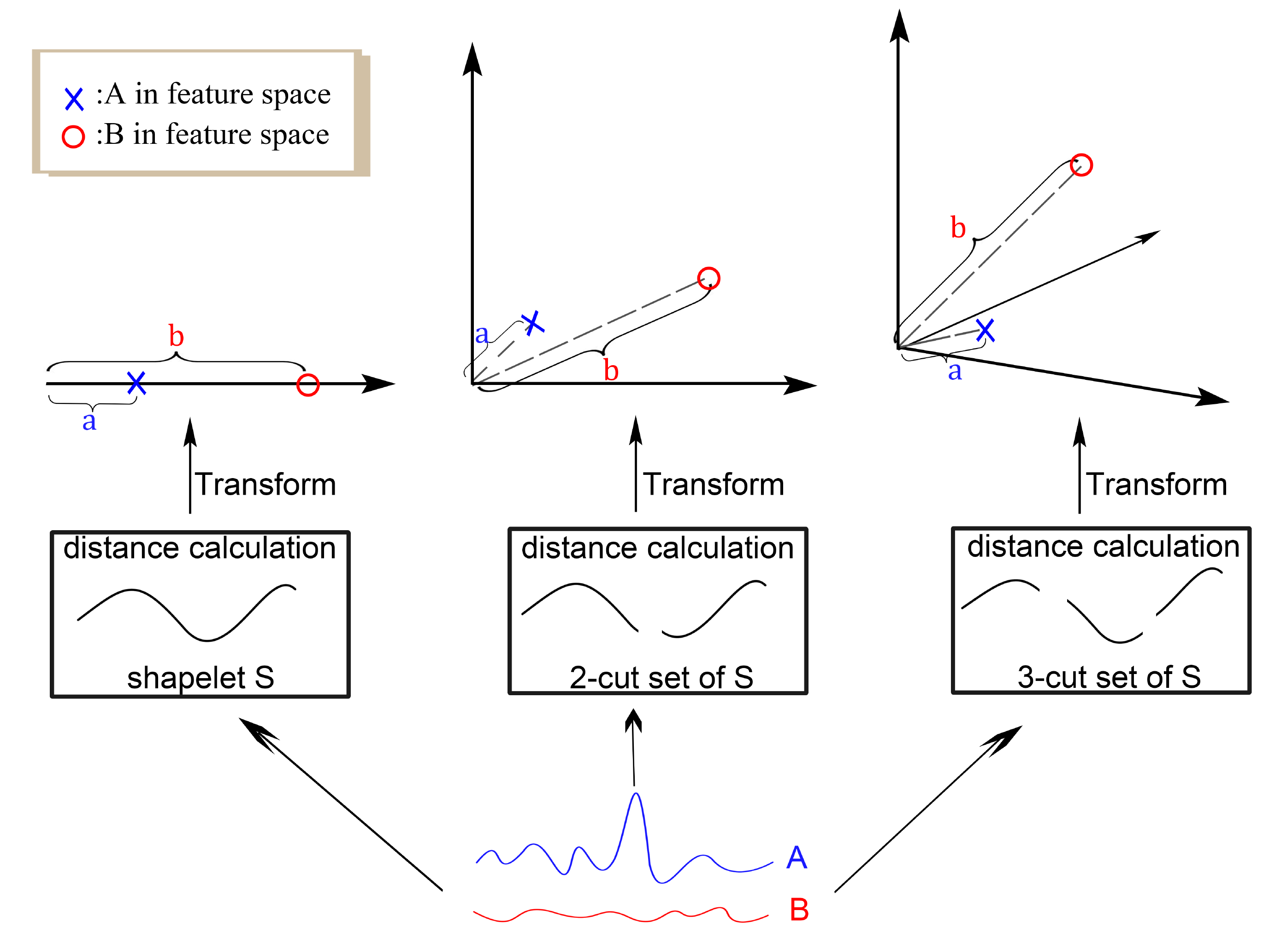}
\caption{An intuitive example to interpret Proposition \ref{prop:1}. A and B are two time series, and they are transformed through a shapelet, its 2-cut set, and its 3-cut set respectively by Fixed Distance. Though A has different representative vectors in different feature spaces, these vectors keep the same length (the Euclidean norm) in different feature spaces. And so does B. Hence one keeps close to origin and the other keeps far from origin in all the three feature spaces.}
\label{Fig 6}
\end{figure}

Furthermore, the following corollary can be proven based on Proposition \ref{prop:1}.

\newtheorem{corl}{Corollary}
~\\
\begin{corl}
\emph{$A$ is a vector set coming from a time series set $B$'s shapelet transform through a shapelet set $C$ by Fixed Distance, $D \subset C$,  $E=\bigcup_{\mathbf{s} \in D} {(n_{\mathbf{s}}}$-cut set of $\mathbf{s})$, and $F=C-D \cup E$. $G$ is another vector set coming from $B$'s shapelet transform through $F$ by Fixed Distance. Then for any time series in $B$, its corresponding vector in $A$ has the same Euclidean norm with its corresponding vector in $F$.}
\label{corl:1}
\end{corl}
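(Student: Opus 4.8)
The plan is to deduce the corollary directly from Proposition \ref{prop:1} by writing the squared Euclidean norm of a transformed vector as a sum of squared coordinates and then matching the two sums term by term. Fix an arbitrary time series $(\mathbf{x},y)$ in $B$. Under $B$'s shapelet transform through $C$ by Fixed Distance, the image of $(\mathbf{x},y)$ is the vector whose coordinate indexed by $\mathbf{t}\in C$ equals $Fixed\_Dist(\mathbf{t},\mathbf{x})$, and likewise for the transform through $F$; since both Euclidean norms are square roots of the corresponding sums of squared coordinates, it suffices to prove that $\sum_{\mathbf{t}\in C} Fixed\_Dist(\mathbf{t},\mathbf{x})^2 = \sum_{\mathbf{t}\in F} Fixed\_Dist(\mathbf{t},\mathbf{x})^2$.

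First I would partition $C$ into $C\setminus D$ together with $D$, and correspondingly view $F=(C\setminus D)\cup E$ as $C\setminus D$ together with $E=\bigcup_{\mathbf{s}\in D}(n_{\mathbf{s}}\text{-cut set of }\mathbf{s})$, reading this last union as a disjoint combination so that each cut piece contributes its own coordinate. The coordinates indexed by $C\setminus D$ are identical on the two sides and cancel, which reduces the claim to $\sum_{\mathbf{s}\in D} Fixed\_Dist(\mathbf{s},\mathbf{x})^2 = \sum_{\mathbf{s}\in D}\sum_{\mathbf{s}'\in(n_{\mathbf{s}}\text{-cut set of }\mathbf{s})} Fixed\_Dist(\mathbf{s}',\mathbf{x})^2$. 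I would then prove this equality summand by summand: for a fixed $\mathbf{s}\in D$, Proposition \ref{prop:1}, applied to the single shapelet $\mathbf{s}$ and the particular $n_{\mathbf{s}}$-cut set used to build $E$ (with $n=n_{\mathbf{s}}>1$), asserts that the one-dimensional vector $V_{\mathbf{x}/\mathbf{s}}$ and the $n_{\mathbf{s}}$-dimensional vector $V_{\mathbf{x}/(n_{\mathbf{s}}\text{-cut set of }\mathbf{s})}$ have equal Euclidean norm; squaring this identity gives exactly $Fixed\_Dist(\mathbf{s},\mathbf{x})^2=\sum_{\mathbf{s}'} Fixed\_Dist(\mathbf{s}',\mathbf{x})^2$. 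Summing over $\mathbf{s}\in D$ closes the argument. An equivalent packaging is a short induction on $|D|$ that replaces the shapelets of $D$ one at a time, invoking Proposition \ref{prop:1} at each step to see that the Euclidean norm of the transformed vector is preserved; after $|D|$ replacements the shapelet set has become $F$.

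Since Proposition \ref{prop:1} already carries the only analytic content, I do not expect a serious obstacle; the parts that need a little care are purely bookkeeping — aligning coordinates correctly between the two transforms, and treating the defining union of $F$ as disjoint so that the coordinate count matches (incidentally, the closing phrase of the statement should read ``its corresponding vector in $G$'' rather than ``in $F$''). I would also dispatch the degenerate sub-case $n_{\mathbf{s}}=1$, where the ``cut set'' is just $\{\mathbf{s}\}$ and the corresponding identity is trivial, so such shapelets may be included in or excluded from $D$ without affecting anything.
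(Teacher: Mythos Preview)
Your proposal is correct and follows essentially the same approach as the paper: split the squared norm into the contribution from $C\setminus D$ (which is common to both transforms) and the contribution from $D$ versus $E$, then apply Proposition~\ref{prop:1} shapelet by shapelet to match the latter. Your write-up is in fact slightly more careful than the paper's, since you explicitly flag that the union defining $F$ must be read as disjoint for the coordinate count to work, you note the typo (``in $F$'' should be ``in $G$''), and you handle the trivial $n_{\mathbf{s}}=1$ case.
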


\begin{proof}
The proof begins with the following notation. $C$ is a shapelet set represented as follows:
\begin{equation}
C=\{\mathbf{s}_1,\mathbf{s}_2,\cdots,\mathbf{s}_m\}.
\end{equation}
 $D$ is a subset of $C$, it is represented as follows:
\begin{equation}
D=\{ \mathbf{s}_{k_1}, \mathbf{s}_{k_2},\cdots,\mathbf{s}_{k_l} \}.
\end{equation}
$E$, as defined in the corollary, is the union set of all the $n_s$-cut sets of shapelets in $D$, and it is represented as follows:
\begin{equation}
E=\{ \mathbf{s}^{1}_{k_1},\cdots,\mathbf{s}^{n_{k_1}}_{k_1},\mathbf{s}^{1}_{k_2},\cdots,\mathbf{s}^{n_{k_2}}_{k_2},\cdots,\mathbf{s}^{1}_{k_l},\cdots,\mathbf{s}^{n_{k_l}}_{k_l} \}.
\end{equation}
Besides, $C-D$ is represented as follows:
\begin{equation}
C-D=\{ \mathbf{s}_{t_1}, \mathbf{s}_{t_2},\cdots,\mathbf{s}_{t_{m-l}} \}.
\end{equation}
Hence, a time series $\mathbf{x}$'s shapelet transform through $C$ and $C-D\cup E$ respectively by Fixed Distance are as follows:
\begin{equation}
V_{\mathbf{x}/C}=\left(
\begin{array}{l}
||V_{\mathbf{x}/\mathbf{s}_{k_1}}||_2,\cdots,||V_{\mathbf{x}/\mathbf{s}_{k_l}}||_2,\\
||V_{\mathbf{x}/\mathbf{s}_{t_1}}||_2,\cdots,||V_{\mathbf{x}/\mathbf{s}_{t_{m-l}}}||_2
\end{array}
 \right),
\end{equation}

\begin{equation}
V_{\mathbf{x}/C-D\cup E}=\left(
\begin{array}{c}
||V_{\mathbf{x}/\mathbf{s}^{1}_{k_1}}||_2,\cdots,||V_{\mathbf{x}/\mathbf{s}^{n_{k_1}}_{k_1}}||_2,\\
\cdots\\
||V_{\mathbf{x}/\mathbf{s}^{1}_{k_l}}||_2,\cdots,||V_{\mathbf{x}/\mathbf{s}^{n_{k_l}}_{k_l}}||_2,\\
||V_{\mathbf{x}/\mathbf{s}_{t_1}}||_2,\cdots,||V_{\mathbf{x}/\mathbf{s}_{t_{m-l}}}||_2
\end{array}
\right).
\end{equation}
Now calculate Euclidean norm of these two vectors,
\begin{equation}
\begin{split}
||V_{\mathbf{x}/C}||_2 &=\sqrt{ \sum_{p=1}^{l} {||V_{\mathbf{x}/\mathbf{s}_{k_p}}||_2^2}+\sum_{q=1}^{m-l}{||V_{\mathbf{x}/\mathbf{s}_{t_q}||_2^2}} } \\
 &=\sqrt{ \sum_{p=1}^{l}{\sum_{r=1}^{n_{k_p}}{||V_{\mathbf{x}/\mathbf{s}^{r}_{k_p}}||_2^2}}+\sum_{q=1}^{m-l}{||V_{\mathbf{x}/\mathbf{s}_{t_q}||_2^2}}  }\\
 & = ||V_{\mathbf{x}/C-D\cup E}||_2
\end{split}
\end{equation}
The second equal sign is hold by Proposition \ref{prop:1}. And this finishes the proof of Corollary \ref{corl:1}.
\end{proof}
~\\

According to this corollary and the analysis tightly after Proposition \ref{prop:1}, if the approximate $n$-cut set of a shapelet $(\mathbf{s},z)$ is added as part of the shapelet basis used for the shapelet transform, then $(\mathbf{s},z)$ in itself can be dropped in that shapelet basis. That's because the classification in the feature space mainly relies on the similarity measurement between transformed data and selected local features. However, the shapelet transform reflects the similarity measurement on the norm of transformed vectors, and Corollary \ref{corl:1} guarantees the identical norms of transformed vectors in different feature spaces constructed by the two different shapelet basis. Hence, Corollary \ref{corl:1} actually says that the original shapelet basis and the changed shapelet basis can construct the nearly same feature space in terms of classification. Therefore,  in a shapelet basis, the substitution of a shapelet with its $n$-cut set is feasible while the eventual purpose is classifying transformed data.

Every long shapelet consists of some short shapelets so that it must have an $n$-cut set including only short sub-shapelets. As stated above, Corollary \ref{corl:1} guarantees that all the shaplets can be replaced by their cut sets of short sub-shapelets. Therefore, selecting the elementary short shapelets is quite enough. Hence, the length of shapelet candidates can be fixed at a small number and these shapelet candidates are called `short isometric shapelet candidates'.

After restricting the length of shapelet candidates to a small constant $h$, the number of shapelet candidates is cut to $n(m-h+1)$ for a $n$-size and $m$-length time series set while there are $\frac{nm(1+m)}{2}$ shapelet candidates in typical shapelet-dependent algorithms.

\subsection{Substituting the Ensemble Classifier in the Feature Space}
\label{subsection:3-3}
After extracting short isometric shapelet candidates, the next step is to select shapelets from these short isometric shapelet candidates. This subsection begins at the analysis of the selection of shapelets. Since it is the key point of substituting the ensemble classifier in the shapelet-dependent feature space.

As previously mentioned, in the process of selecting shaplets from shapelet candidates, if the prime information gain is used as the discrimination index of shapelet candidates, then the evaluation of a shapelet candidate can not get rid of searching a prime segmentation for data represented in a real number axis (see Subsection \ref{subsection:2-1}). And this operation is time-consuming.

On the other hand, the selected shapelets are also relevant to the distribution of transformed time series in the feature space. If nothing can be ensured of this distribution in the feature space, things become fairly complicated. The following illustration is shown to explain this point.

In the ensemble shapelet transform algorithm, the classifier is often an ensemble one of almost all kinds of typical classifiers trained in the feature space. It costs much time to train such an elaborate ensemble classifier. However, it is necessary to do that on account of the uncertainty of the distribution of data in the feature space. In that algorithm, each base classifier tries to catch one possible distribution of data in the feature space, while using the single classifier must undertake the risk of misestimating the data distribution. One radical cause of that plight is that the discrimination index of shapelets is designed without a thought about the combination of shapelets. That is to say, mere good separability in each dimension ensures little in the total space.

This illustration exemplifies that considering more than discrimination of a single shapelet is required while selecting shapelets from shapelet candidates. The combination of shapelets is another important issue while selecting shapelets. One example is that if the selected shapelet set constructing the feature space can ensure a high linear separability of the data distribution, the classification conducted in the feature space will become comparatively easy. In the Euclidean feature space constructed by the shapelet transform, each shapelet represents a dimension, and values in that dimension represent the distance to the shapelet. If projections of transformed data onto each dimension is completely separable, which means there is a cut line completely dividing data from different two classes, the transformed data can be surely linearly separable in the total space. However, in practice, the majority of axes is the one where the projection onto it can not be completely divided, even though shapelets are selected as discriminative as they can. Hence, the difficulty is  to ensure a high degree of linear separability while projections of data onto a single axis can not be completely divided. Thanks to the next Proposition \ref{prop:2},  this goal can be achieved by using the generalized Rayleigh quotient as the selection priority of shapelet candidates.

Before stating and proving Proposition \ref{prop:2}, the generalized Rayleigh quotient used in this paper should be defined. For two different classes of data distributed on a single real axis, the generalized Rayleigh quotient is calculated by the next formula:

\begin{equation}
GRQ(A,B)=\frac{|\mu(A)-\mu(B)|}{\sigma^2(A)+\sigma^2(B)}
\label{EQ:GRQ}
\end{equation}

where $GRQ$ is a short hand of `generalized Rayleigh quotient', $\mu(\cdot)$ is the function to get the mean value, and $\sigma^2(\cdot)$ is the function to get the variance. For the shapelet candidate in a binary TSC problem, after transforming the original time series to real numbers by distance calculation, Eq. (\ref{EQ:GRQ}) can be used to calculate the $GRQ$ value.

The next part is the statement and the proof of the crucial Proposition.
~\\
\begin{prop}
\label{prop:2}
\emph{For a binary TSC problem in a Euclidean feature space constructed by the shapelet transform, the generalized Rayleigh quotient of the projections of transformed data onto any single axis has a positive correlation with the degree of the linear separability of the transformed data in the total space.}
\end{prop}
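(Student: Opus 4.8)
The plan is to quantify the ``degree of linear separability'' of the shapelet-transformed data by the multivariate Fisher discriminant ratio, and then to show that this quantity is forced to be large whenever the per-axis generalized Rayleigh quotients are large. Let $\boldsymbol{\mu}_A,\boldsymbol{\mu}_B$ be the class means and $\Sigma_A,\Sigma_B$ the within-class covariance matrices of the transformed data in the $k$-dimensional feature space, write $\mathbf{d}=\boldsymbol{\mu}_A-\boldsymbol{\mu}_B$ with entries $d_i=\mu_i(A)-\mu_i(B)$, and observe that the single-axis quantity of Eq.~(\ref{EQ:GRQ}) is $GRQ(A_i,B_i)=|d_i|/\bigl(\sigma_i^{2}(A)+\sigma_i^{2}(B)\bigr)$, where $\sigma_i^{2}(A),\sigma_i^{2}(B)$ are the $i$th diagonal entries of $\Sigma_A,\Sigma_B$. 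I would take as the degree of linear separability the number
\begin{equation}
J^{*}=\max_{\mathbf{w}\neq\mathbf{0}}\frac{\bigl(\mathbf{w}^{\top}\mathbf{d}\bigr)^{2}}{\mathbf{w}^{\top}(\Sigma_A+\Sigma_B)\,\mathbf{w}},
\end{equation}
and first record, via a one-dimensional Chebyshev estimate on the projection onto the maximizing direction together with an optimization of the decision threshold, that the best linear classifier misclassifies a fraction at most of order $1/J^{*}$ of each class; hence ``$J^{*}$ large'' is exactly ``the transformed data is close to linearly separable.'' This justification is routine.

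The core of the argument is a lower bound on $J^{*}$ in terms of the per-axis quantities. Evaluating the Rayleigh objective at the difference-of-means direction $\mathbf{w}=\mathbf{d}$ gives $J^{*}\ge\|\mathbf{d}\|^{4}/\bigl(\mathbf{d}^{\top}(\Sigma_A+\Sigma_B)\mathbf{d}\bigr)$, and since $\Sigma_A+\Sigma_B$ is positive semidefinite its largest eigenvalue is at most its trace, so $\mathbf{d}^{\top}(\Sigma_A+\Sigma_B)\mathbf{d}\le\|\mathbf{d}\|^{2}\operatorname{tr}(\Sigma_A+\Sigma_B)$. Combining these,
\begin{equation}
J^{*}\ \ge\ \frac{\|\mathbf{d}\|^{2}}{\operatorname{tr}(\Sigma_A+\Sigma_B)}\ =\ \frac{\sum_{i}d_i^{2}}{\sum_{i}\bigl(\sigma_i^{2}(A)+\sigma_i^{2}(B)\bigr)}.
\end{equation}
The right-hand side is a ratio in which the $i$th axis contributes $d_i^{2}$ to the numerator and $\sigma_i^{2}(A)+\sigma_i^{2}(B)$ to the denominator; holding the other coordinates fixed, it is strictly increasing in each $|d_i|$ and strictly decreasing in each within-class spread $\sigma_i^{2}(A)+\sigma_i^{2}(B)$. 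Since the per-axis mean gaps $d_i$ are essentially fixed by the chosen local features, raising $GRQ(A_i,B_i)$ amounts to shrinking the within-class spread on axis $i$, which can only raise this lower bound on $J^{*}$; more broadly, across the pool of candidate shapelets the high-$GRQ$ candidates are exactly those with large gap or small within-class spread, and both push $J^{*}$ up. This is the claimed positive correlation, and it is precisely what legitimizes ranking shapelet candidates by Eq.~(\ref{EQ:GRQ}) and then training a single linear SVM in the resulting feature space.

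The main obstacle — and the reason the conclusion must be stated as a correlation rather than a sharp monotone identity — is the treatment of the off-diagonal entries of $\Sigma_A,\Sigma_B$ and the passage from the single optimal direction back to the individual coordinate axes. Bounding $\lambda_{\max}$ by the trace is lossy; a tight, genuinely additive statement (``each axis contributes $d_i^{2}/(\sigma_i^{2}(A)+\sigma_i^{2}(B))$'') would follow from evaluating the objective at the diagonal-discriminant direction $w_i\propto d_i/\bigl(\sigma_i^{2}(A)+\sigma_i^{2}(B)\bigr)$, for which the Fisher value equals $\sum_i d_i^{2}/\bigl(\sigma_i^{2}(A)+\sigma_i^{2}(B)\bigr)$ exactly, but that requires the within-class correlations among the shapelet-distance features to be negligible or at least bounded. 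I would therefore either carry the clean trace bound above, which needs no assumption on correlations, or adjoin a mild bounded-correlation hypothesis and obtain the exact additive form; in either route the monotone dependence of the bound on each $GRQ(A_i,B_i)$ is what is reported. A secondary point to fix up front is the precise meaning of ``positive correlation,'' which here is the statement that the separability surrogate $J^{*}$ is non-decreasing along the natural one-parameter changes (shrinking a within-class variance, enlarging a mean gap) that raise a single-axis generalized Rayleigh quotient.
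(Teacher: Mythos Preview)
Your argument is sound as a heuristic justification but takes a genuinely different route from the paper. The paper does not work with the Fisher discriminant ratio at all: it assumes the two class-conditional feature vectors are independent and (via an appeal to the central limit theorem) normally distributed with \emph{diagonal} covariance, defines the degree of linear separability as $LS=\max_{\|\boldsymbol{w}\|=1}P((X-Y)\cdot\boldsymbol{w}>0)$, and then argues directly that for the optimal $\boldsymbol{w}$, increasing any single $|\mu_j(X)-\mu_j(Y)|$ raises the mean of the Gaussian $(X-Y)\cdot\boldsymbol{w}$ while leaving its variance fixed, and decreasing any $\sigma_j^2(X)+\sigma_j^2(Y)$ shrinks that variance while leaving the mean fixed; in both cases $P((X-Y)\cdot\boldsymbol{w}>0)$ goes up. So the paper gets monotonicity of $LS$ itself, but at the price of a Gaussian-plus-diagonal-covariance assumption baked in from the start (their variance formula $(\sigma^2(X)+\sigma^2(Y))\cdot[\boldsymbol{w}]^2$ already presumes zero off-diagonal terms). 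Your trace-bound route is cleaner in that it needs no distributional assumption and tolerates arbitrary within-class correlations, but the price is that you only control a \emph{lower bound} on $J^{*}$, not $J^{*}$ itself: it is perfectly possible for $\|\mathbf{d}\|^2/\operatorname{tr}(\Sigma_A+\Sigma_B)$ to rise while $J^{*}$ falls, so the monotone comparative-statics statement you extract is about the surrogate, not the separability measure. Your second option---evaluating at the diagonal-discriminant direction under a bounded-correlation hypothesis---is in spirit the same compromise the paper makes implicitly, and would let you conclude monotonicity of $J^{*}$ itself; if you want to match the paper's strength of conclusion, that is the branch to take, and you should state the diagonal (or near-diagonal) covariance assumption explicitly rather than leave it as a side remark.
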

\begin{proof}
Firstly, some notations are necessary:
\begin{align*}
& X\backslash Y\backslash X'\backslash Y'  & : & \   a \ real \ random \ vector  \\
& \Omega & : & \ set \ of \ all  \ real  \ random \ vectors  \\
& \mu : \Omega \mapsto \cup_{k=1}^{\infty} R_k& : & \ a \ function \ getting  \ the \ mean  \\
& & & \ vector \ of \ the \ random \ vector \\
& {\mu}_j : \Omega \mapsto R & : & \ a \ function \ equals \ to \ \pi_j\circ \mu \ where \\
& & &  \ \pi_j \ is \ a \ projection \ map \ which \\
& & & \ gets \ the \ value \ of \ a \ vector's \ j\textrm{-}th \\
& & & \ dimension \\
& [ \ ]^2 & : & \ get \ a \ vector \ which  \ the \ value \ of \\
& & & \ each \ dimension  \ equals  \ the \ square \\
& & & \ of \ the \ original \ value \\
& {\sigma}^2 : \Omega \mapsto \cup_{k=1}^{\infty} R_k & : & \ a \ function \ getting \ the \ diagonal \\
& & & \ vector \ of \ the \ covariance \ matrix \\
& & & \ of \ the \ random \ vector \\
& {\sigma}_j^2 : \Omega \mapsto R & : & \ a \ function \ equals \ to \ \pi_j\circ {\sigma}^2 \ where \\
& & &  \ \pi_j \ is \ a \ projection \ map \ which \\
& & & \ gets \ the \ value \ of \ a \ vector's \ j\textrm{-}th \\
& & & \ dimension \\
& {\Sigma}_{X\backslash Y\backslash X'\backslash Y'} & : & \  the \ covariance \ matrix \ of \ the  \\
& & & \ random \ vector  \\
& LS & : & \ the \ degree \ of  \ the \ linear  \ separability \\
& & &  \ of \ vectors \ of \ two \ different \ classes
\end{align*}

Based on the above notations, the proof can start. Assume that there are two time series classes whose corresponding sets are $A$ and $B$, a shapelet set $C$ whose cardinality is $k$. $A$'s shapelet transform through $C$ by some metric (see definition \ref{defi 1}) forms the sample set of a $k$-dimensional real random vector $X$. The same process of $B$ forms the sample set of a $k$-dimensional real random vector $Y$. By the central limit theorem,  these two random vectors can be presumed to be independent spherical normally distributed. That is:
\begin{align}
X &\sim N(\mu(X),\Sigma_X)   \\
Y &\sim N(\mu(Y),\Sigma_Y)
\end{align}
And we naturally define the degree of the linear separability by the following formula:
\begin{equation}
\begin{split}
LS= \max_{\boldsymbol{w}} \ &  P((X-Y)\cdot \boldsymbol{w}>0)   \\
  &  ||\boldsymbol{w}||=1, \boldsymbol{w} \in R_k
\end{split}
\label{f:25}
\end{equation}

For proving Proposition \ref{prop:2}, we need to prove that $|\mu_{j}(X)-\mu_{j}(Y)|$ has a positive correlation with $LS$ and $\sigma_j^2{(X)}+\sigma_j^2{(Y)}$ has a negative correlation with $LS$ for any $j$ in scope.

For a fixed $\boldsymbol{w}$, the following result can be deduced by the previous conditions.
\begin{equation}
(X-Y) \cdot \boldsymbol{w} \sim N((\mu(X)-\mu(Y)) \cdot \boldsymbol{w},  (\sigma^2{(X)}+\sigma^2{(Y)}) \cdot [\boldsymbol{w}]^2)
\label{f:26}
\end{equation}

For the prime normal vector $\boldsymbol{w}$ of Eq. (\ref{f:25}), it has that:
\begin{gather}
(\mu(X)-\mu(Y)) \cdot \boldsymbol{w} \geq 0    \label{f:27} \\
(\mu_j (X)- \mu_j (Y)) \cdot \boldsymbol{w}_j \geq 0   \quad 1\leq j \leq k
\label{f:28}
\end{gather}

For the first inequality, that is because $\boldsymbol{w}$ can be substituted with $-\boldsymbol{w}$ if the inequality does not hold. Then $-\boldsymbol{w}$ is better than $\boldsymbol{w}$, which contradicts that $\boldsymbol{w}$ is the prime normal vector. It is the same with the second inequality, if it is less than $0$, just substitute $\boldsymbol{w}_j$ with $-\boldsymbol{w}_j$ to get a better normal vector $\boldsymbol{w}'$. And it contradicts that $\boldsymbol{w}$ is the prime normal vector. Fig.  \ref{Fig 12} shows these cases.

Assuming that the best $LS$ and the prime normal vector $\boldsymbol{w}$ have been acquired,  if substitute $X$ and $Y$ with $X'$ and $Y'$ which are same with $X$ and $Y$ except for a specific $j$ where $|\mu_j(X')-\mu_j(Y')|>|\mu_j(X)-\mu_j(Y)|$, then the following results hold from Eq. (\ref{f:26}), (\ref{f:27}) and (\ref{f:28}).
\begin{gather}
\mu((X'-Y')\cdot \boldsymbol{w'})>\mu((X-Y)\cdot \boldsymbol{w}) \geq 0 \\
\sigma^2((X'-Y')\cdot \boldsymbol{w'})=\sigma^2((X-Y)\cdot \boldsymbol{w})
\end{gather}
Here $\boldsymbol{w'}$ equals $\boldsymbol{w}$ on all dimensions except for the $j$-th value. And $\boldsymbol{w'}_j=sign((\mu_j(X')-\mu_j(Y')) \cdot |\boldsymbol{w}_j|$, which aims to make $(\mu_j(X')-\mu_j(Y')) \cdot \boldsymbol{w'}_j$ positive. Combining with (\ref{f:25}), they shows that:
\begin{equation}
LS'\geq P((X'-Y') \cdot \boldsymbol{w'}>0)>P((X-Y) \cdot \boldsymbol{w}>0)=LS
\label{f:31}
\end{equation}
\begin{figure}[!t]
\centering
\includegraphics[width=0.5\linewidth]{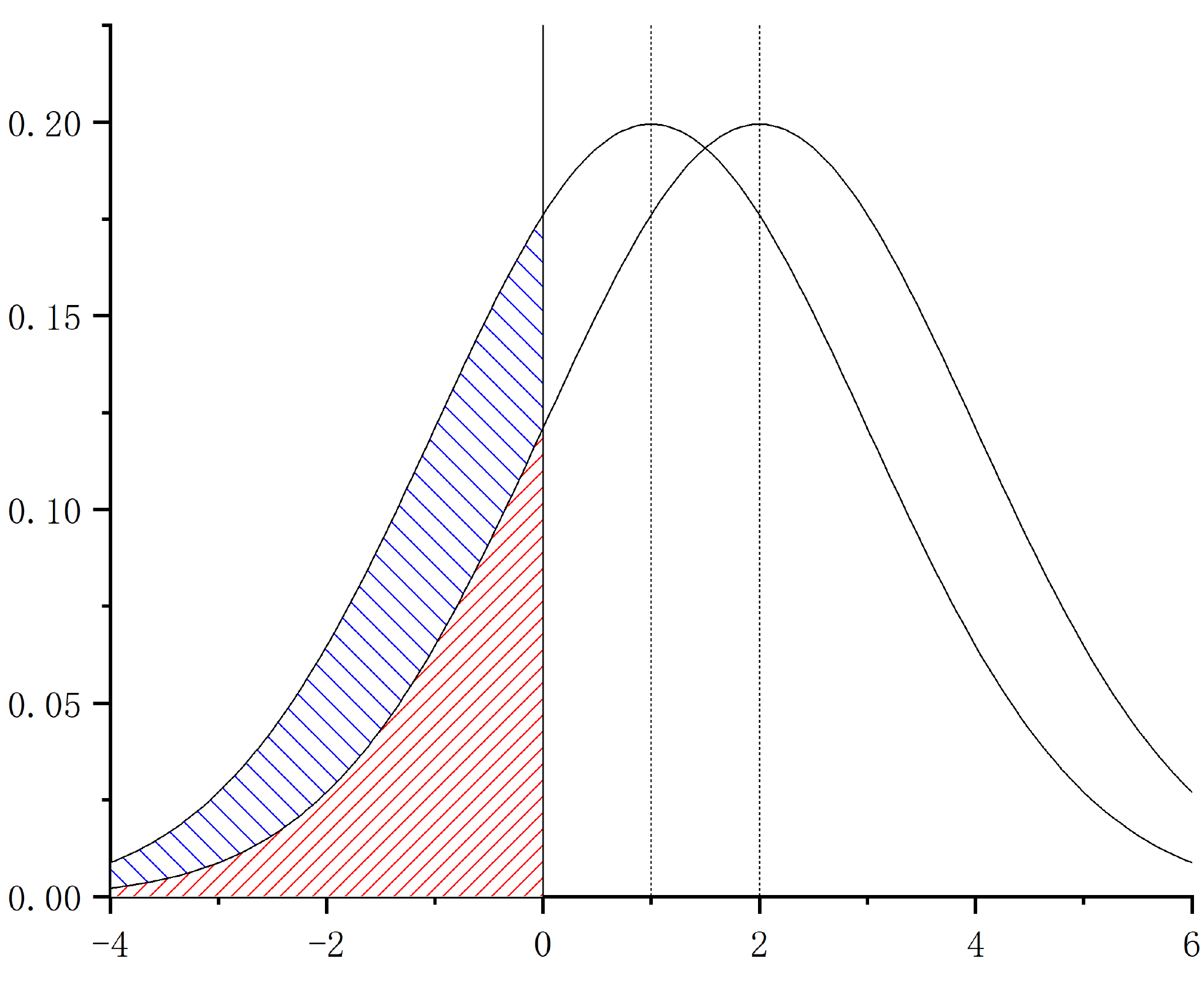}
\caption{Two independent normally distributed variables with the identical variance. The one whose mean is larger has a higher probability of being positive than the other. That's because the area under two curves are both $1$ and the curve with the larger mean possesses a smaller area left to $x=0$. Hence the curve with the larger mean has a larger area right to $x=0$, which means a higher probability of getting a positive value. }
\label{Fig 12}
\end{figure}
Fig. \ref{Fig 12} shows that situation. And this means that $|\mu_{j}(X)-\mu_{j}(Y)|$ has a positive correlation with $LS$ for arbitrary $j$ in scope.

Similarly, keep the assumption that the best $LS$ and the prime normal vector $\boldsymbol{w}$ of the optimization problem \ref{f:25} have been acquired, and substitute $X$ and $Y$ with $X'$ and $Y'$ which are same with $X$ and $Y$ except for a specific $j$ where $\sigma_j^2{(X')}+\sigma_j^2{(Y')}<\sigma_j^2(X)+\sigma_j^2(Y)$, then  the following results hold from Eq. (\ref{f:26}):
\begin{gather}
\mu((X'-Y')\cdot \boldsymbol{w})=\mu((X'-Y')\cdot \boldsymbol{w}) \geq 0 \\
\sigma^2((X'-Y')\cdot \boldsymbol{w})<\sigma^2((X-Y)\cdot \boldsymbol{w})
\end{gather}
\begin{figure}[!t]
\centering
\subfigure{\includegraphics[width=0.49\linewidth]{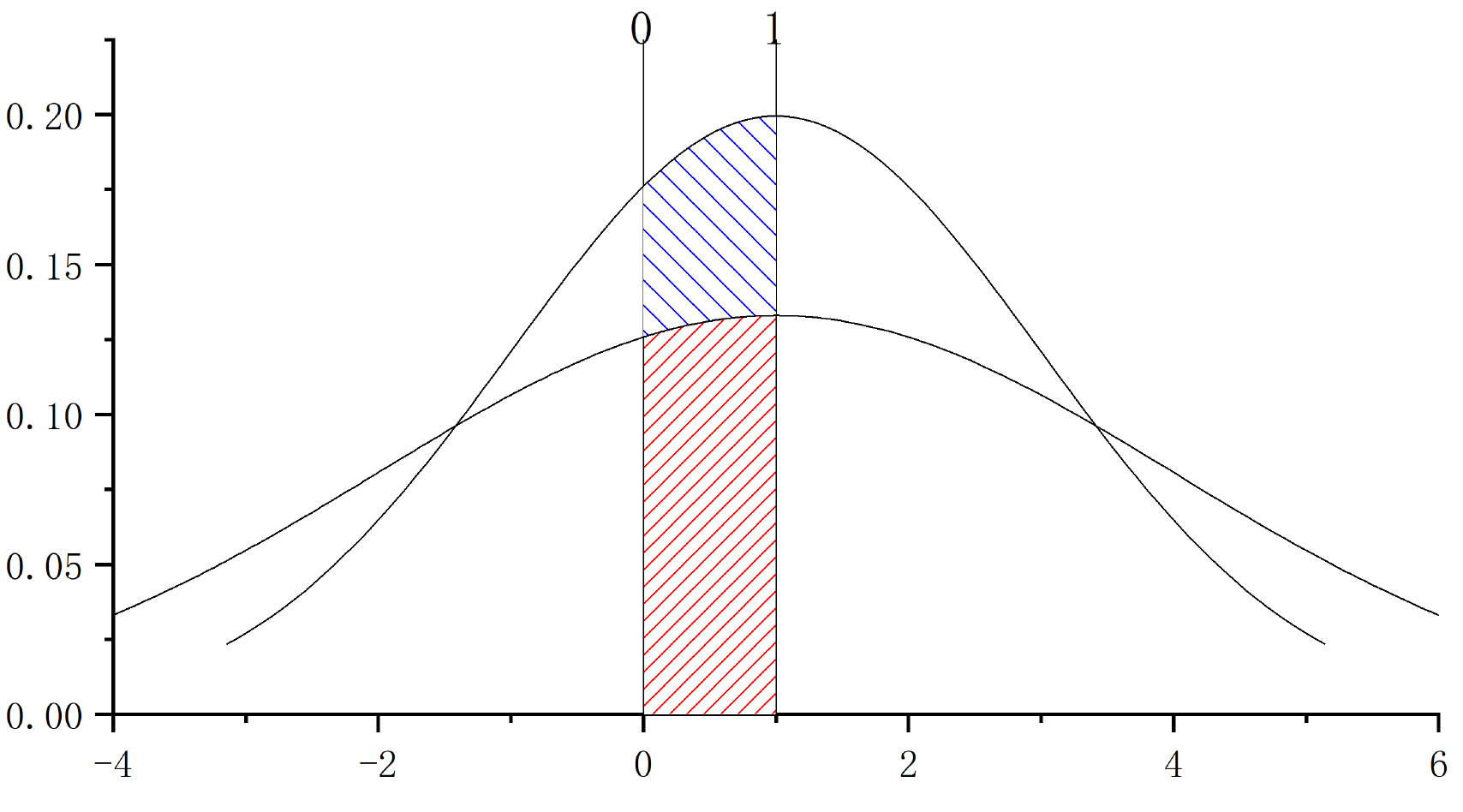}}
\subfigure{\includegraphics[width=0.49\linewidth]{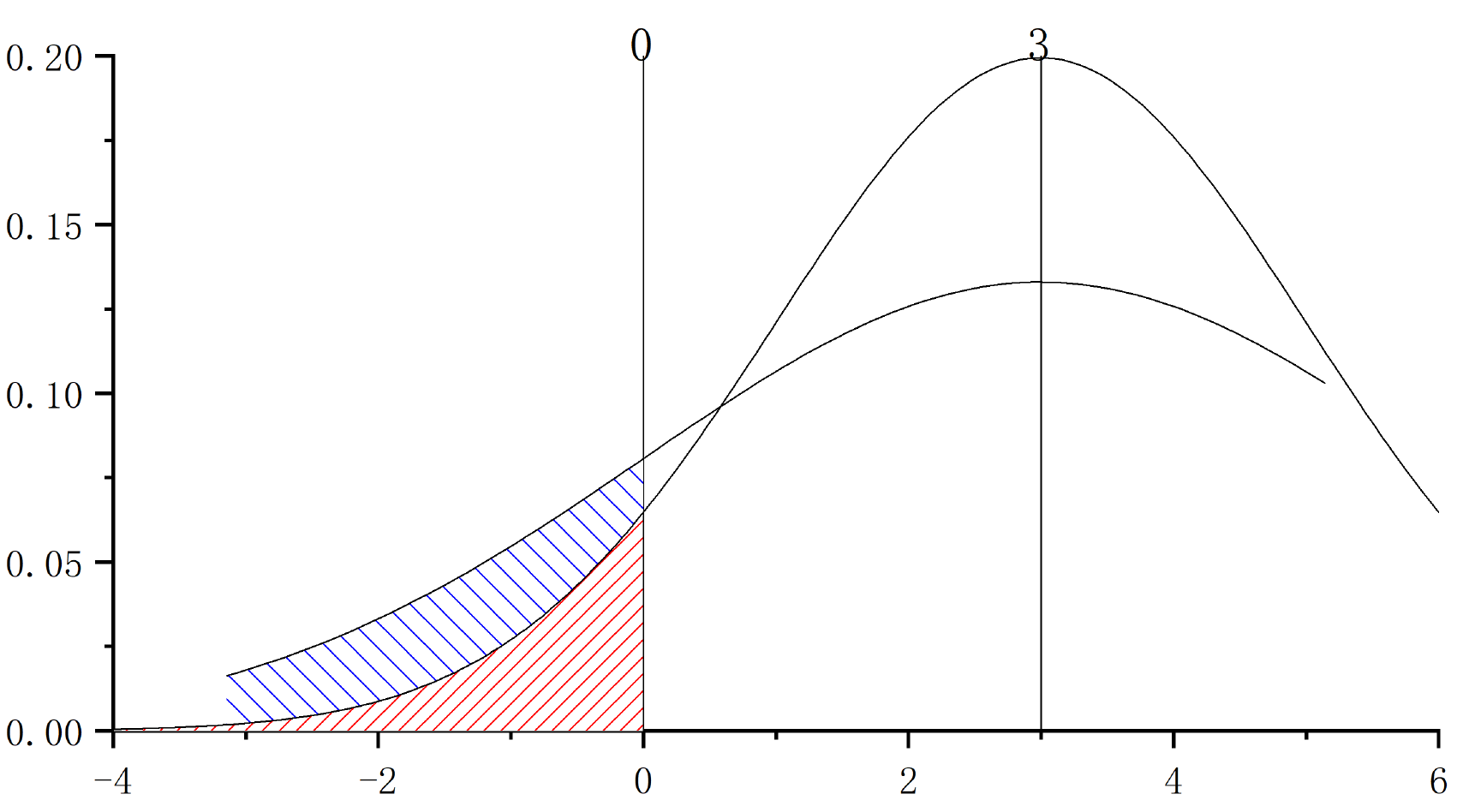}}
\caption{For two independent normally distributed variables with the identical positive mean, the one whose variance is smaller has a higher probability of being positive than the other. There are two situations. The first one is that the left intersection point of two curves is left to $x=0$, which is shown in the first picture. The second one is that the left intersection point of two curves is right to $x=0$, which is shown in the second picture.}
\label{Fig 13}
\end{figure}
Combining with (\ref{f:25}), Eq. (\ref{f:31}) holds too. Fig. \ref{Fig 13} shows that situation. And this means that $\sigma_j^2{(X)}+\sigma_j^2{(Y)}$ has a negative correlation with $LS$ for arbitrary $j$ in scope. Then the proof of Proposition \ref{prop:2} is completed.
\end{proof}
~\\

 From the calculation formula of the $GRQ$ (Eq. (\ref{EQ:GRQ})), it can be seen that the $GRQ$ surely has the ability to quantify the separation extent of data in a real number axis. Hence the large $GRQ$ value guarantees the discrimination of a single selected shapelet. The prime information gain also has the ability. But the large $GRQ$ value in each single axis can ensure a high degree of the linear separability in the total space, whereas the large prime information gain can not do that. And this advantage is exactly what Proposition \ref{prop:2} states. This property of the $GRQ$ is the most important advantage of it compared with the prime information gain.

In Proposition \ref{prop:2}, each axis of the feature space actually corresponds to the real number axis of a shapelet. And the projections of transformed time series onto each axis are actually the distances from time series to the corresponding shapelet. Hence, Proposition \ref{prop:2} can ensure a high degree of the linear separability of data in the feature space under the precondition of selecting shapelets with the large $GRQ$ value. As a result, if shapelets are selected according to the rank of $GRQ$ values, training a single SVM to classify data also works at most cases. Once the ensemble classifier is substituted with a single SVM, time consumption is largely reduced for the reduction of training costs. Moreover, using the $GRQ$ as the selection priority of shapelet candidates also successfully gets rid of the time-consuming operation that searching the best segmentation for different data in a real number axis.

\subsection{Outline and Time Complexity of Training  SIST}
\label{subsection:3-4}
Considering the small delay which may happen in time series, a small relaxation is added to Eq. (\ref{EQ 2}):

\begin{equation}
\label{EQ 3}
\begin{split}
& Relaxed\_Fixed\_Dist(\mathbf{s},\mathbf{x})=\\
& min \left\lbrace d(\mathbf{s},P) \Bigg |
\begin{array}{c}
P=(x_{i_1},\cdots,x_{i_k}), \\
 i_1<i_2<\cdots <i_k<j+k+r,\\
 -l \leq i_1-j \leq r, \\
1 \leq i_1 \leq Len(\mathbf{x})-k+1
\end{array}
\right\rbrace
\end{split}
\end{equation}
where both $l$ and $r$ are very small relaxation constants, $P$ is the ordered subsequence which can be continuous or discontinuous, $Len(\cdot)$ is the function to get the length of time series, and other parameters are same with them in Eq. (\ref{EQ 2}).

The algorithmic outline of training  SIST  is given in Algorithm \ref{alg1}.

\begin{algorithm}
\caption{Train the SIST classifier}
\label{alg1}
\begin{algorithmic}[1]
\REQUIRE $D$ the binary data set, $L$ the shapelet length, $r_a$ the left relaxation factor, $r_b$ the right relaxation factor, $N$ the number of shapelets
\ENSURE Shapelet set $S$ whose cardinality is $N$, a SVM classifier $C$ with the linear kernel
\STATE initialize a null set $T$;
\FOR{each time series $t \in D $}
\STATE initialize a $L$-length shapelet candidate set $T_t$ by a $L$-length window sliding from the initial point to the end point of $t$ with step length $1$;
\STATE $T=T \cup T_t$;
\ENDFOR
\STATE initialize a priority queue $S$;
\FOR{each shapelet candidate $s \in T$}
\FOR{each time series $t \in D $}
\STATE calculate Relaxed Fixed Distance between $s$ and $t$ by Eq. (\ref{EQ 3});
\ENDFOR
\STATE calculate the generalized Rayleigh quotient of the calculated distances by Eq. (\ref{EQ:GRQ});
\STATE set the generalized Rayleigh quotient as the priority of $s$;
\STATE $S=S \cup s$;
\ENDFOR
\STATE $S$=subqueue of $S$ from position $0$ to $N-1$
\STATE do $D$'s shapelet transform through $S$ by Relaxed Fixed Distance (the process is stated in definition \ref{defi 1}) and get a vector set $\Omega$;
\STATE train a SVM $C$ with linear kernel in $\Omega$;
\end{algorithmic}
\end{algorithm}

For a data set $D$ of $n$ time series of length $m$, fixing the length of shapelet candidates at $k$, the time consumption of training  SIST  can be estimated as follows:
\begin{equation}
\begin{split}
T(SIST) &=  \mathcal{O}(n(m-k+1)T(SP)+T(R)+T(SVM))  \\
        &\approx  \mathcal{O}(n^2mk+nm\log(nm)+Nn^2)   \\
        &=  \mathcal{O}(n^2\max(mk,N))   \\
        &<  \mathcal{O}(n^2m^2)   \\
\end{split}
\end{equation}
where $T(SP)$, which is multiplied by the number of shapelet candidates, means the time complexity of calculating the selection priority of shapelet candidates, $T(R)$ means the time complexity of ranking all the shapelet candidates, and $T(SVM)$ means the time complexity of training a SVM with the linear kernel in the feature space. Since the components of transformed vectors can be acquired in the process of calculating the selection priority, there is no extra time needed for the shapelet transform based on the selected shapelet basis. Empirically, $N$, which is the dimension of the feature space, is set less than $m^2$. And $k$ is obviously less than $m$. Therefore, the inequality is satisfied. And in the ensemble ST algorithm, the time complexity is $\mathcal{O}(n^2m^4)$ \cite{Hills2013ClassificationOT,Bostrom2015BinaryST}. This time complexity analysis shows some  advantages of the proposed strategies. And in the next section, some experimental evidences of the advantages are given.

\begin{table}[!t]
	\renewcommand{\arraystretch}{1.8}
	\caption{Hyperparameter Set}
	\label{Table 0}
	\centering
	\begin{tabular}{|c|c|}
		\hline
		delete overlap shapelet & {true, false}\\
		\hline
		shapelet length & {3,4}\\
		\hline
		left and right relaxation factor & {3,4}\\
		\hline
		shapelet number & \tabincell{c}{10,50,100,250,500,750, \\ 1000,1250,1500,2000}\\
		\hline
	\end{tabular}
\end{table}

\begin{table*}[h]
\renewcommand{\arraystretch}{1.8}
\caption{Data Attributes and Hyperparameters Chosen by 10-fold Cross Validation.
	DO:Delete Overlap.
	SL: Shapelet Length.
	Relaxation Factor: RF.
	Shapelet Number: SN.
	PPOC: ProximalPhalanxOutlineCorrect.
}
\label{Table 1}
\centering
\begin{tabular}{|c|c|c|c|c|c|c|}
\hline
\multicolumn{3}{|c|}{Data Attribute} & \multicolumn{4}{|c|}{Hyperparameter of SIST}\\
\hline
Dataset & Train$\backslash$Test & Length & DO & SL  & L$\backslash$R RF & SN \\
\hline
Coffee & 28$\backslash$28 & 286 & true & 3 & 3$\backslash$3 & 10\\
\hline
DistalPhalanxOutlineCorrect & 600$\backslash$276 & 80 & true & 4 & 4$\backslash$4 & 2000\\
\hline
Earthquakes & 322$\backslash$139 & 512 & true & 3 & 3$\backslash$3 & 10\\
\hline
ECG200 & 100$\backslash$100 & 96 & true & 3 & 4$\backslash$3 & 1500\\
\hline
ECGFiveDays & 23$\backslash$861 & 136 & false & 3 & 3$\backslash$3 & 2000\\
\hline
GunPoint & 50$\backslash$150 & 150 & false & 4 & 3$\backslash$3 & 1250\\
\hline
Ham  & 109$\backslash$105 & 431 & true & 3 & 4$\backslash$3 & 50\\
\hline
Herring & 64$\backslash$64 & 512 & true & 4 & 4$\backslash$3 & 10\\
\hline
ItalyPowerDemand & 67$\backslash$1029 & 24 & false & 3 & 4$\backslash$4 & 100\\
\hline
MiddlePhalanxOutlineCorrect & 600$\backslash$291 & 80 & true & 3 & 4$\backslash$4 & 1500\\
\hline
MoteStrain & 20$\backslash$1252 & 84 & true & 4 & 3$\backslash$4 & 250\\
\hline
PPOC & 600$\backslash$291 & 80 & true & 4 & 3$\backslash$3 & 2000\\
\hline
SonyAIBORobotSurface1 & 20$\backslash$601 & 70 & false & 4 & 4$\backslash$3 & 50\\
\hline
SonyAIBORobotSurface2 & 27$\backslash$953 & 65 & false & 3 & 3$\backslash$3 & 2000\\
\hline
Strawberry & 613$\backslash$370 & 235 & false & 4 & 4$\backslash$4 & 1250\\
\hline
TwoLeadECG & 23$\backslash$1139 & 82 & true & 3 & 4$\backslash$3 & 50\\
\hline
Wine & 57$\backslash$54 & 234 & true & 3 & 3$\backslash$3 & 250\\
\hline
\end{tabular}
\end{table*}

\section{Experimental Evidences}
\label{section:experiment}
\subsection{Compared Algorithms and Datasets}
\label{subsection:4-1}
To show the advantages of the proposed strategies,  SIST  is compared with the state-of-the-art. According to the research of A. Bagnall et al.\cite{Bagnall2016TheGT}, the following eight algorithms are  the state-of-the-art algorithms. They are COTE (Collection of Transformation Ensembles) \cite{Bagnall2015TimeSeriesCW}, ST (ensemble edition) \cite{Bostrom2015BinaryST}, BOSS (Bag of SFA Symbols, ensemble edition) \cite{Schfer2014TheBI}, EE (Elastic Ensemble) \cite{Lines2014TimeSC}, DTWF (Dynamic Time Warping Features) \cite{Kate2015UsingDT}, TSF (Time Series Forest) \cite{Deng2013ATS}, TSBF (Time Series Bag of Features) \cite{Baydogan2013ABF}, and LPS (Learned Pattern Similarity) \cite{Baydogan2015TimeSR}.

The proposed algorithm has some hyperparameters to be selected, and Table \ref{Table 0} shows the hyperparameter sets used in the experiments. Line 1 is about whether removing the overlap shapelets. The item `true' means that if two shapelets overlap, then the one with the lower selection priority is removed. And this choice may make the final shapelet number  smaller than the target number. This item means each discriminative position in a time series is at most once selected for classification. It is to say,  users want a balance among every discriminative position's contribution to classification in case that one position's information covers others. The choice `false' is just the opposite, which means users want to see that the more discriminative the position is, the stronger effects it has on the classification. Line 2 is about the length of shapelet candidates. Hyperparameters in line 3 decide the relaxation extent of the Fixed Distance calculation. Line 4 is about the number of shapelets used for the shapelet transform.

As for data sets, only the binary data sets from UCI data sets are selected, since Eq. (\ref{EQ:GRQ}) is restricted on binary TSC problems. In principle, for a multiclass TSC problem, the `one vs all' strategy can solve it. However, the time complexity will be multiplied by the number of classes under this strategy. Hence, the `one vs all' strategy may be not a good method for using the proposed strategies in a multiclass TSC problem. Multiclass TSC problems have a different structure from binary TSC problems, and this is another issue needing to be explored rather than tackled by a simple `one vs all' strategy. Actually, the future work is regarding to expanding the proposed strategies to multiclass TSC problems. But this paper mainly focuses on the theoretical evidences and some advantages of the proposed strategies. Table \ref{Table 1} shows details of data sets and hyperparameter selections of  SIST.
Since these data sets are commonly used benchmark data sets in the research area of TSC, the default training/testing split settings are used here for a fair comparison.

\subsection{Comparison Experiments}
\label{subsection:4-2}
This subsection will show advantages of the proposed strategies by comparing  SIST  with the state-of-the-art algorithms in terms of accuracy and efficiency. The purpose of the proposed strategies is  to reduce the time complexity with little loss of the accuracy achieved by the state-of-the-art on binary TSC problems. Hence,  it should be checked that  SIST  indeed achieves a top classification accuracy on most of binary TSC problems.

\begin{table*}[!t]
\renewcommand{\arraystretch}{1.8}
\caption{Accuracy Comparison on Binary Datasets with eight Top TSC Algorithms. 
	DPOC: DistalPhalanxOutlineCorrect. 
	ECGFD: ECGFiveDays.
	IPD: ItalyPowerDemand.
	MPOC: MiddlePhalanxOutlineCorrect.
	PPOC: ProximalPhalanxOutlineCorrect
	SAIBORS1: SonyAIBORobotSurface1.
	SAIBORS2: SonyAIBORobotSurface2.
	TLECG: TwoLeadECG.
}
\label{Table 2}
\centering
\begin{tabular}{|c|cccccccc|c|}
\hline
Dataset & DTWF & ST & BOSS & TSF & TSBF & LPS & EE & COTE & SIST\\
\hline
Coffee & 0.973 & 0.995 & 0.989 & 0.989 & 0.982 & 0.95 & 0.989 & $\bm{1.000}$ & $\bm{1.000}$\\
\hline
DPOC & 0.796 & 0.829 & 0.815 & 0.810 & 0.816 & 0.767 & 0.768 & 0.804 & $\bm{0.830}$\\ % DistalPhalanxOutlineCorrect
\hline
Earthquakes & 0.748 & 0.737 & 0.746 & 0.747 & 0.757 & 0.668 & 0.735 & 0.747 & $\bm{0.871}$\\
\hline
ECG200 & 0.819 & 0.840 & $\bm{0.891}$ & 0.868 & 0.847 & 0.808 & 0.881 & 0.873 & 0.86\\
\hline
ECGFD & 0.907 & 0.955 & 0.983 & 0.922 & 0.849 & 0.840 & 0.847 & $\bm{0.986}$ & 0.978\\
\hline
GunPoint & 0.964 & $\bm{0.999}$ & 0.994 & 0.961 & 0.965 & 0.972 & 0.974 & 0.992 & 0.967\\
\hline
Ham & 0.795 & 0.808 & 0.836 & 0.795 & 0.711 & 0.685 & 0.763 & 0.805 & $\bm{0.838}$\\
\hline
Herring & 0.609 & 0.653 & 0.605 & 0.606 & 0.591 & 0.549 & 0.566 & 0.632 & $\bm{0.875}$\\
\hline
IPD & 0.948 & 0.953 & 0.866 & 0.958 & 0.926 & 0.914 & 0.914 & 0.970 & $\bm{0.978}$\\ %ItalyPowerDemand
\hline
MPOC & 0.798 & 0.815 & 0.808 & 0.794 & 0.800 & 0.770 & 0.782 & 0.801 & $\bm{0.897}$\\% MiddlePhalanxOutlineCorrect
\hline
MoteStrain & 0.891 & 0.882 & 0.846 & 0.874 & 0.886 & $\bm{0.917}$ & 0.875 & 0.902 & 0.889\\
\hline
PPOC & 0.829 & 0.881 & 0.867 & 0. 847 & 0.861 & 0.851 & 0.839 & 0.871 & $\bm{0.900}$\\% ProximalPhalanxOutlineCorrect
\hline
SAIBORS1 & 0.884 & 0.888 & 0.897 & 0.845 & 0.839 & 0.842 & 0.794 & $\bm{0.899}$ & 0.839\\ %SonyAIBORobotSurface1
\hline
SAIBORS2 & 0.859 & 0.924 & 0.889 & 0.856 & 0.825 & 0.851 & 0.870 & $\bm{0.960}$ & 0.877\\ %SonyAIBORobotSurface2
\hline
Strawberry & $\bm{0.970}$ & 0.968 & $\bm{0.970}$ & 0.963 & 0.968 & 0.963 & 0.959 & 0.963 & 0.959\\
\hline
TLECG & 0.958 & 0.984 & $\bm{0.985}$ & 0.842 & 0.910 & 0.928 & 0.959 & 0.983 & 0.982\\
\hline
Wine & 0.892 & 0.926 & 0.912 & 0.881 & 0.879 & 0.884 & 0.887 & 0.904 & $\bm{1.000}$\\
\hline
\end{tabular}
\end{table*}

%\begin{table}[!t]
%\caption{Hyperparameter Set}
%\label{Table 1}
%\centering
%\begin{tabular}{|c||c|}
%\hline
%delete overlap shapelet & {true,false}\\
%\hline
%shapelet length & {3,4}\\
%\hline
%left and right relaxation factor & {3,4}\\
%\hline
%shapelet number & {10,50,100,250,750,1250,1500,2000}\\
%\hline
%\end{tabular}
%\end{table}

\begin{figure}[!t]
\centering
\includegraphics[width=\linewidth]{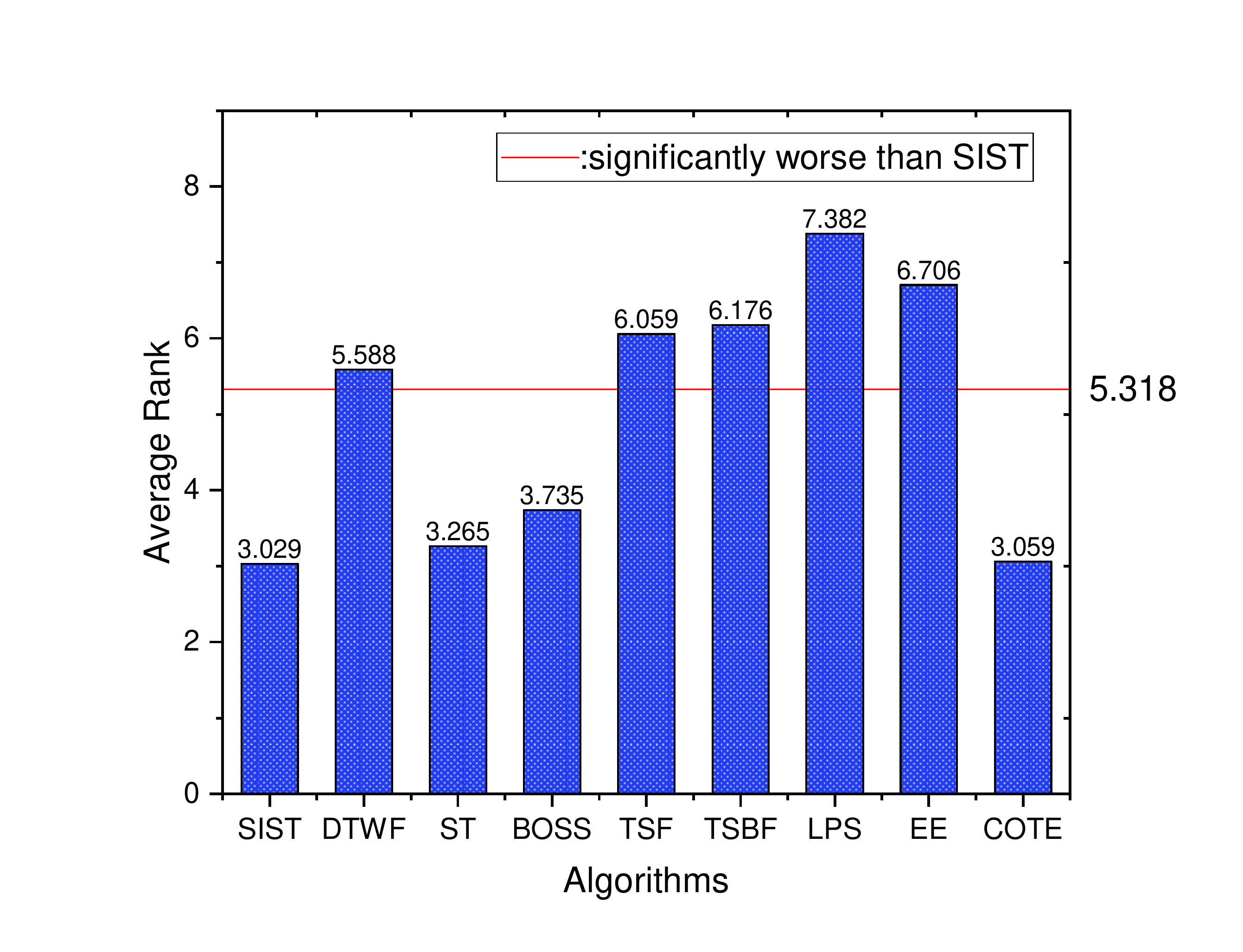}
\caption{Average accuracy rank and Friedman test with 95\% confidence. The average rank in this figure is the arithmetic mean of the accuracy ranks on all the experimental data sets. The red cut line in this figure is the threshold calculated by the Friedman test.}
\label{Fig 9}
\end{figure}

Table \ref{Table 2} presents the accuracy comparison between  SIST  and the state-of-the-art algorithms on many data sets. The accuracy of the compared algorithms comes from the TSC community. 
The hyperparameters of these compared algorithms were  strictly selected to achieve as high accuracy as possible. According to the table,  SIST  performs best on more than a half of the data sets. 
%Actually, the accuracy of  SIST  is fairly close to the best accuracy on the remainder data sets except for `SonyAIBORobotSurface'. 
On the other half of the experimental data sets,  SIST  may not be the best one, 
but it still possesses a reasonable accuracy compared with the state-of-the-art algorithms. 
As it is previously mentioned, the model proposed in the paper tries to reduce the time complexity with the cost of as small accuracy drop as possible. 
The strategies exploited in  SIST  mainly serve the purpose of nearly losslessly reducing the time complexity, 
and from the theoretical analysis in the paper, 
it is also clear that the strategies here will not manifestly boost the classification performance on the basis of the state-of-the-art algorithms. 
Hence, it is sensible here that  SIST does not prevail in all the datasets. 
The key point here is to check whether  SIST also performs well in the datasets where it does not achieve the best accuracy. 
Actually, from the result in the table, the accuracy of  SIST  is fairly close to the state-of-the-art accuracy on these data sets. 
The experimental result here along with the theoretical analysis in Section \ref{section:theory}  
shows that  SIST  does achieve the purpose of keeping accurate enough.

More details about the accuracy comparison can be seen in Fig. \ref{Fig 9}. In this figure, all the algorithms' average ranks are depicted. By putting these average ranks in the same histogram, a holistic view of the comparison is illustrated between  SIST  and the compared algorithms. According to the data in this figure, it can be judged whether  SIST  is significantly better than the compared algorithms by the Friedman test with a 95\% confidence. The red cut line in the figure is a threshold calculated according to the Friedman test. 
SIST  are significantly better than an compared algorithm if its average rank is larger than the threshold.
In Fig. \ref{Fig 9}, the red cut line goes across the bars of the DTWF, the TSF, the TSBF, the LPS, and the EE, which means average ranks of those five algorithms are larger than the threshold. Hence,  SIST  are significantly better than those five algorithms  in terms of classification accuracy. The  four remainder algorithms are the ST, the BOSS, the COTE, and  SIST. And they are in the same level in terms of the classification accuracy. Although  SIST  is not significantly better than the ST, the BOSS, and the COTE in the Friedman test with a 95\% confidence, the average rank of  SIST  is still smaller than the average ranks of those three algorithms. Hence,  SIST  is at least not worse than the ST, the BOSS, and the COTE in terms of the classification accuracy.

\begin{table*}[!t]
	\renewcommand{\arraystretch}{1.8}
	\caption{Time Comparison on Binary Datasets among the first-tier Algorithms with the Same Accuracy Level (CPU : Intel(R) Core(TM) i7-6500U CPU @ 2.50GHz ; RAM : 8GB ; Operating System : Windows 10 x64)}
	\label{Table 3}
	\centering
	\begin{tabular}{|c||ccc||c|}
		\hline
		DataSet & ST & BOSS & COTE & SIST\\
		\hline
		Coffee & 6.3s & 6.6s & 9.5E+03s & $\bm{0.8s}$\\
		\hline
		DistalPhalanxOutlineCorrect & 9.5E+03s & 91.8s & 2.9E+05s & $\bm{88.9s}$\\
		\hline
		Earthquakes & 1.5E+03s & 1.6E+03s & 9.1E+05s & $\bm{61.3s}$\\
		\hline
		ECG200 & 64.5s & 5.0s & 1.3E+04s & $\bm{1.5s}$\\
		\hline
		ECGFiveDays & 7.6s & 1.1s & 1.1E+03s & $\bm{0.3s}$\\
		\hline
		GunPoint & 16.2s & 3.4s & 7.0E+03s & $\bm{0.9s}$\\
		\hline
		Ham & 51.1s & 133.7s & 2.8E+05s & $\bm{6.4s}$\\
		\hline
		Herring & 15.3s & 66.2s & 1.8E+05s & $\bm{5.4s}$\\
		\hline
		ItalyPowerDemand & 16.7s & 0.6s & 88.3s & $\bm{0.1s}$\\
		\hline
		MiddlePhalanxOutlineCorrect & 1.3E+05s & 101.5s & 2.5E+05s & $\bm{34.1s}$\\
		\hline
		MoteStrain & 4.7s & 0.5s & 151.8s & $\bm{0.1s}$\\
		\hline
		ProximalPhalanxOutlineCorrect & 9.8E+03s & 78.3s & 2.6E+05s & $\bm{50s}$\\
		\hline
		SonyAIBORobotSurface1 & 2.0s & 0.5s & 114.2 & $\bm{0.1s}$\\
		\hline
		SonyAIBORobotSurface2 & 4.6s & 1.0s & 212.8s & $\bm{0.2s}$\\
		\hline
		Strawberry & 1.0E+04s & 790.5s & 5.3E+05s & $\bm{279.7s}$\\
		\hline
		TwoLeadECG & 4.4s & 0.6s & 283.0s & $\bm{0.1s}$\\
		\hline
		Wine & 11.2s & 9.0s & 4.3E+04s & $\bm{0.6s}$\\
		\hline
	\end{tabular}
\end{table*}

Based on the above experimental evidences and analysis, the ST, the BOSS, the COTE, and  SIST  can be presently regarded as the first tier candidates of the TSC algorithms in terms of the classification accuracy. Next, the efficiency comparison among the first-tier algorithms will be focused on. Since the purpose of the proposed strategies is to reduce the time complexity with little loss of the accuracy achieved by the state-of-the-art, the precondition of keeping the good accuracy is important. Therefore, these algorithms that may possess the lower time consumption are not considered since they can not keep a high accuracy. In other words, the efficiency comparison focuses only on the first-tier candidates.

Table \ref{Table 3} presents the time consumption comparison on all the experimental data sets in the same environment. Since these four algorithms are all eager learning algorithms, the time consumption is mainly spent in training the classifier. Hence, the time shown in Table \ref{Table 3} is the training time. From  Table \ref{Table 3}, it can be seen that  SIST  has the lowest time consumption on all the experimental data sets.  SIST  has evident advantage compared with the ST, the BOSS, and the COTE in terms of time consumption, which indicates the running time advantage. The COTE has a fairly large training time although it is hot on the heels of  SIST  in Fig. \ref{Fig 9}. To some extent, the huge training time of the COTE makes some troubles in the practical application. On some data sets, the training time of the COTE is several days, whereas  SIST  only spend some minutes in getting a similar classification result. On some data sets, the training time of the compared first-tier algorithms may be some hours or some minutes, whereas  SIST  only spends some seconds. 
%In Table \ref{Table 3},  SIST   outperforms the four first-tier algorithms in terms of the run time, which implies the effectiveness of the proposed strategies.
On some small-scale datasets such as `ECG200’, ‘ECGFiveDays’, and `ItalyPowerDemand’, 
the BOSS algorithm performs competitively compared with  SIST. 
However, as the scale of datasets increases, the time consumption of the BOSS algorithm swells much faster than  SIST. 
For instance, on some large-scale datasets, such as `Ham’, `Herring’, and `Earthquakes’, 
 SIST  performs more efficiently than the BOSS algorithm. 
The rationale behind it is that the BOSS algorithm exploits two technologies, called Discrete Fourier Transform (DFT) and Multiple Coefficient Binning (MCB) \cite{Schfer2014TheBI}. 
The running time of these two technologies is sensitive to the scale of the training data. 
Therefore, the efficiency advantage of  SIST  compared with the BOSS algorithm becomes increasingly evident when the scale of the training data grows large. 
In summary, from the results in Table \ref{Table 3},  SIST  outperforms the four first-tier algorithms in terms of the running time, which implies the effectiveness of the proposed strategies.

\begin{figure}
	\centering
	\subfigure{\includegraphics[width=0.32\linewidth]{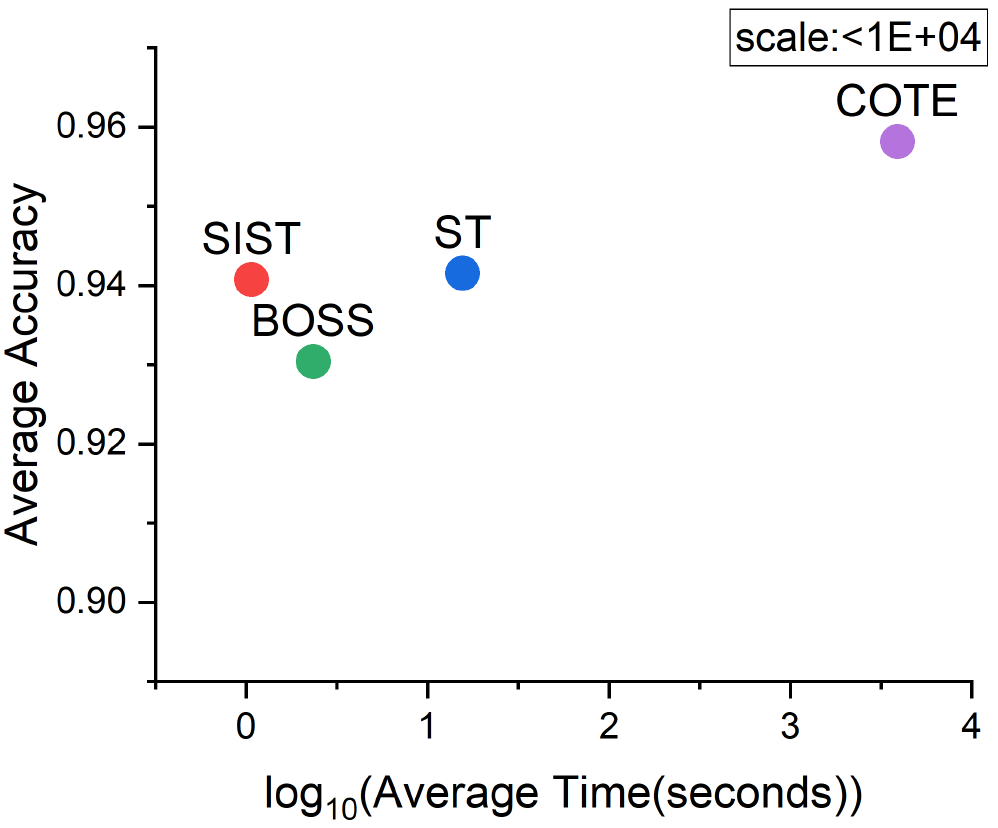} }
	\subfigure{\includegraphics[width=0.33\linewidth]{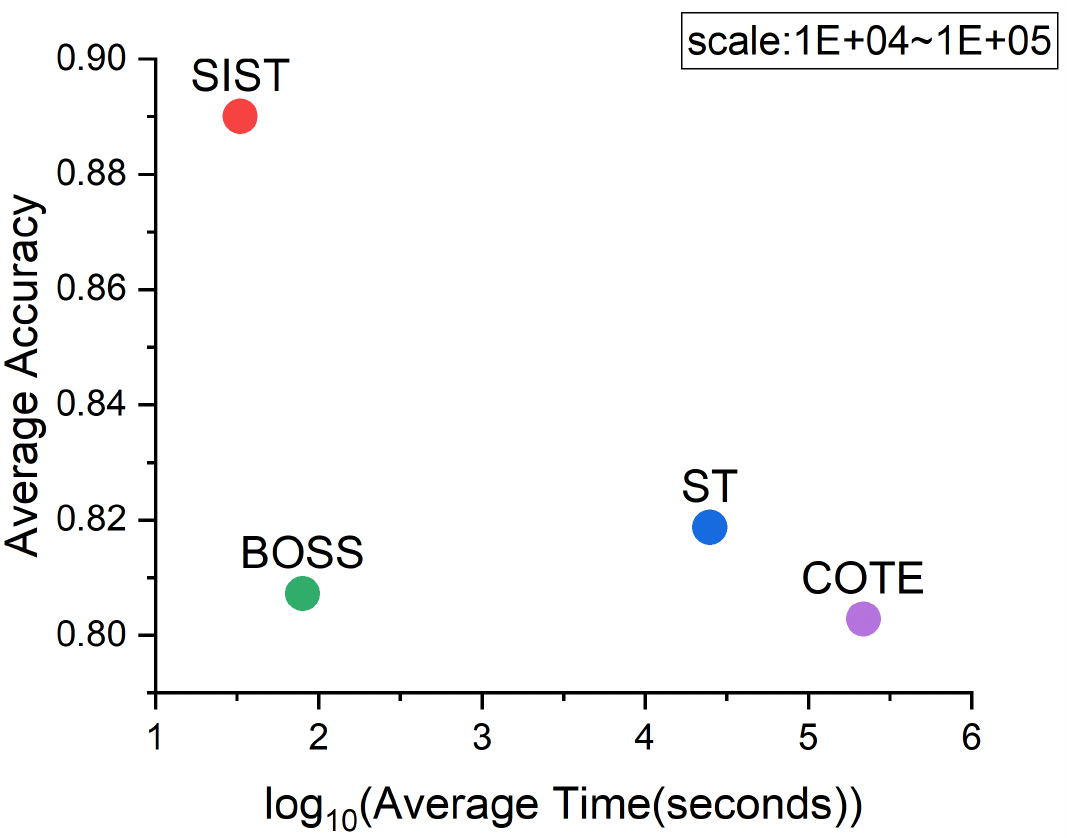}}
	\subfigure{\includegraphics[width=0.32\linewidth]{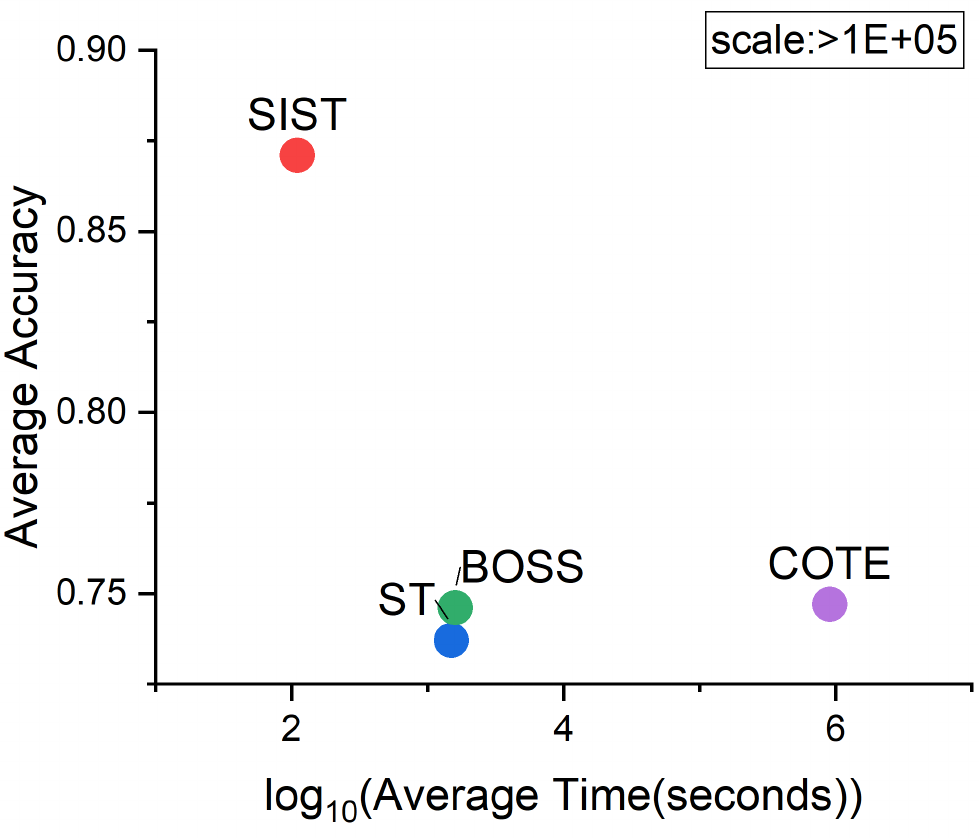}}
	\caption{Time-Accuracy comparison on data sets of different scale (scale is the product of train set size and data length)}
	\label{Fig 10}
\end{figure}

Next, a further comparison about the efficiency is conducted. By defining the product of the train set size and the data length as the scale of a data set, the experimental data sets are sorted into three categories according to their scales. Then the average accuracy and the average training time of these four compared algorithms are depicted in the coordinate system. Fig. \ref{Fig 10} illustrates the results where three kinds of data sets correspond to three graphs. The vertical axis represents the average accuracy and the horizontal axis represents the average training time. Note that the average training time is dealt with a logarithmic function basing $10$ since the difference of the average training time is too big to depict them in a coordinate system. Hence, the horizontal distance is not uniform or even. The real time difference increases exponentially as the horizontal distance grows linearly in the coordinate system.

Firstly, the average accuracy of all the compared algorithms declines as the scale of data sets inclines. Secondly, no matter what the scales of data sets are,  SIST  remains the lowest average training time while the COTE algorithm has the highest one. Thirdly, on the data sets with a scale more than $1E+04$,  SIST  achieves the best average accuracy as well as the lowest average training time. Fourthly, as the scale of data sets increases, it is increasingly obvious of both the time advantage and the accuracy advantage of  SIST. Fifthly, there is no algorithm dominating  SIST  in  Fig. \ref{Fig 10}.

According to those discoveries in Table \ref{Table 3} and Fig. \ref{Fig 10},  SIST  indeed possesses an efficiency advantage on binary TSC problems. 
The superiority of  SIST  are supported by the two proposed strategies.

%\begin{table*}[!t]
%\caption{Rank Comparison}
%\label{Table 4}
%\centering
%\begin{tabular}{|c||cccccccc||c|}
%\hline
%Attribute & DTWF & ST & BOSS & TSF & TSBF & LPS & EE & COTE & SIST\\
%\hline
%Rank Mean & 6.000 & 3.267 & 3.933 & 5.933 & 6.533 & 7.533 & 6.267 & 2.933 & $\bm{2.267}$\\
%\hline
%Difference & 3.733 & 1.000 & 1.666 & 3.666 & 4.266 & 5.266 & 4.000 & 0.666 & 0\\
%\hline
%Significantly Better by Friedman Test & Yes & No & No & Yes & Yes & Yes & Yes & No &  \\
%\hline
%\end{tabular}
%\end{table*}

\begin{table*}[!t]
	\renewcommand{\arraystretch}{1.8}
	\caption{The performance of SIST compared with LS, EL, and FS(CPU : Intel(R) Core(TM) i7-6500U CPU @ 2.50GHz ; RAM : 8GB ; Operating System : Windows 10 x64).
	DPOC: DistalPhalanxOutlineCorrect. 
	ECGFD: ECGFiveDays.
	IPD: ItalyPowerDemand.
	MPOC: MiddlePhalanxOutlineCorrect.
	PPOC: ProximalPhalanxOutlineCorrect
	SAIBORS1: SonyAIBORobotSurface1.
	SAIBORS2: SonyAIBORobotSurface2.
	TLECG: TwoLeadECG.
	}
	\label{ls-el-fs}
	\centering
	\begin{tabular}{|c|c|c|c|c|}
		\hline
		Accuracy/Time(s) & LS & EL & FS & SIST\\
		\hline
		Coffee & 1.000(1)/77.6(4) & 1.000(1)/0.4(1) & 0.964(4)/5.3(3) & 1.000(1)/0.8(2)\\
		\hline
		DPOC & 0.741(2)/305.9(4) & 0.713(4)/0.4(1) & 0.728(3)/18.6(2) & 0.830(1)/88.9(3)\\
		\hline
		Earthquakes & 0.748(2)/4.7E+03(4) & 0.711(4)/3.6(1) & 0.712(3)/1.1E+03(3) & 0.871(1)/61.3(2) \\
		\hline
		ECG200 & 0.850(2)/57.9(4) & 0.84(3)/0.1(1) & 0.75(4)/2.3(3) & 0.86(1)/1.5(2) \\
		\hline
		ECGFD & 1.000(1)/16.5(4) & 0.920(4)/0.1(1) & 0.995(2)/0.7(3) & 0.978(3)/0.3(2) \\
		\hline
		GunPoint & 1.000(1)/49.3(4) & 0.967(2)/0.1(1) & 0.94(4)/1.4(3) & 0.967(2)/0.9(2) \\
		\hline
		Ham & 0.686(2)/1.0E+03(4) & 0.600(4)/1.1(1) & 0.667(3)/157.8(3) & 0.838(1)/6.4(2) \\
		\hline
		Herring & 0.609(3)/763.1(4) & 0.641(2)/4.2(1) & 0.609(3)/91.2(3) & 0.875(1)/5.4(2) \\
		\hline
	    IPD & 0.962(2)/5.2(4) & 0.946(3)/0.1(1) & 0.906(4)/0.1(1) & 0.978(1)/0.1(1) \\
		\hline
		MPOC & 0.780(2)/323.3(4) & 0.750(3)/0.7(1) & 0.663(4)/15.1(2) & 0.897(1)/34.1(3) \\
		\hline
		MoteStrain & 0.913(1)/6.5(4) & 0.888(3)/0.1(1) & 0.798(4)/0.2(3) & 0.889(2)/0.1(1) \\
		\hline
		PPOC & 0.767(3)/303.1(4) & 0.738(4)/2.2(1) & 0.838(2)/11.7(2) & 0.900(1)/50.0(3) \\
		\hline
		SAIBORS1 & 0.952(1)/4.7(4) & 0.929(2)/0.1(1) & 0.686(4)/0.2(3) & 0.839(3)/0.1(1) \\
		\hline
		SAIBORS2 & 0.890(1)/6.4(4) & 0.749(4)/0.1(1) & 0.790(3)/0.2(2) & 0.877(2)/0.2(2) \\
		\hline
		Strawberry & 0.884(4)/2.2E+03(4) & 0.941(2)/0.7(1) & 0.908(3)/77.8(2) & 0.959(1)/279.7(3) \\
		\hline
		TLECG & 1.000(1)/7.2(4) & 0.990(2)/0.1(1) & 0.946(4)/0.2(3) & 0.982(3)/0.1(1) \\
		\hline
		Wine & 0.500(4)/144.0(4) & 0.537(3)/0.3(1) & 0.778(2)/4.7(3) & 1.000(1)/0.6(2) \\
		\hline
	\end{tabular}
\end{table*}

Besides  the state-of-the-art algorithms, 
SIST  is compared with other shapelet-based time reduction algorithms in the research field of TSC. 
They include LS (Learning Time-series Shapelets) \cite{Grabocka2014LearningTS}, 
EL (Efficient Learning of Time-series Shapelets)\cite{Hou2016EfficientLO}, 
and FS (Fast Shapelets) \cite{Keogh2013FastSA}. 
For a fair comparison, we keep the default hyperparameter settings in their papers and use the code they offer to the public. 
The result is shown in Table \ref{ls-el-fs}. 
From the table, we observe that  EL is the most efficient algorithm and  SIST is the most accurate algorithm. 
LS has a reasonable accuracy but a comparatively poor efficiency, and the FS reverses the situation in the LS. 
On some data sets,  SIST does not achieve the best accuracy, 
but it still possesses a fairly well accuracy close to the best accuracy. 
Though  SIST is not as efficient as the EL, the running time of it is also acceptable in most cases. 
EL  is based on a gradient method. 
It formulates the shapelet extraction problem as an optimization problem, 
but the weakness is that the optimization problem is not convex due to the nonlinear constraint. 
Therefore, it may have the overfitting problem or even the divergent problem, which leads to unsteady classification performance. 
For instance, on the data set 'Wine', both two gradient methods diverge, the performance of both  EL and  LS is not good. 
While the  running time of  SIST is slightly longer than EL, but its accuracy is  much better than EL.
From a holistic view,  SIST  has the steadiest and possibly the most accurate classification performance in the experiments.
%and its running time is also good  in the four compared algorithms. 

\subsection{Hyperparameter Analysis}
There are four types of hyperparameters in  SIST. 
To study their respective influence on the result, 
we fix  other three types of hyperparameters when studying one type of them. 
For instance, if the hyperparameter `shapelet length’ is studied, 
then there are $2 \times 4 \times 10=80$ possible combinations of  other three types of hyperparameters \ref{Table 0}.
For each possible combination, we consider all the possible choices of the `shapelet length’ 
and get the classification accuracy of each choice with the fixed three other hyperparameters.
Then, a group of classification accuracy is obtained on each data set for each combination of  other three hyperparameters, 
and  the standard deviation of the group of classification accuracy is calculated. 
Therefore, for each data set and each possible combination of  other three hyperparameters, 
a standard deviation is reported to show the sensitivity of  SIST on that data set to the hyperparameter `shapelet length’. 
An overall result is depicted in the above line charts. 
In the line chart, each polyline represents a data set, 
and each point in the polyline is the sensitive extent of  SIST to the relative hyperparameter 
on the data set under a specific setting of  other three hyperparameters. 
The sensitivity is the standard deviation calculated according to the rule described above. 

\begin{figure*}[htp]
	\centering
	\subfigure[overlap deletion]{\includegraphics[width=0.49\linewidth]{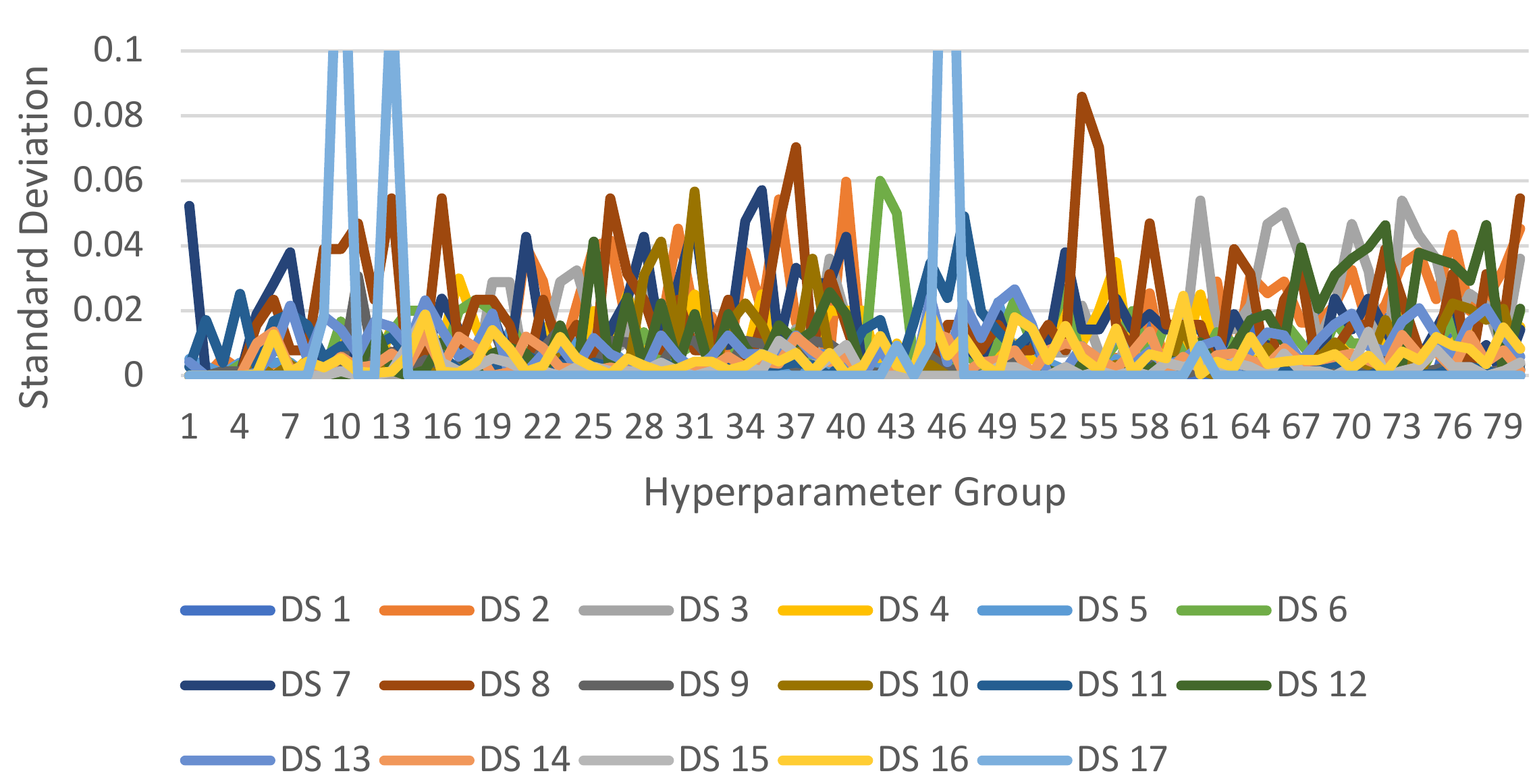} }
	\subfigure[shapelet length]{\includegraphics[width=0.49\linewidth]{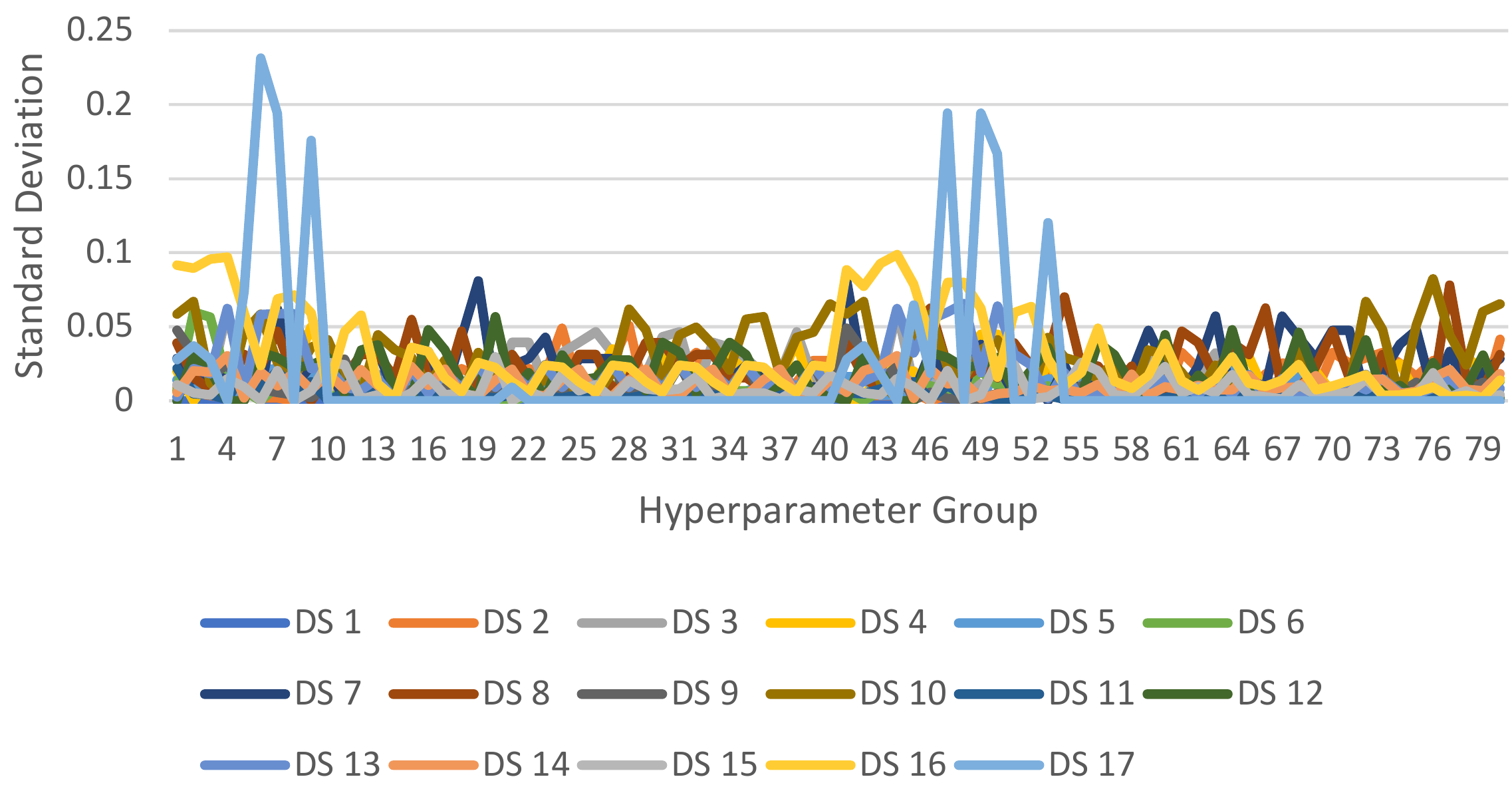}}
	\subfigure[relax factor]{\includegraphics[width=0.49\linewidth]{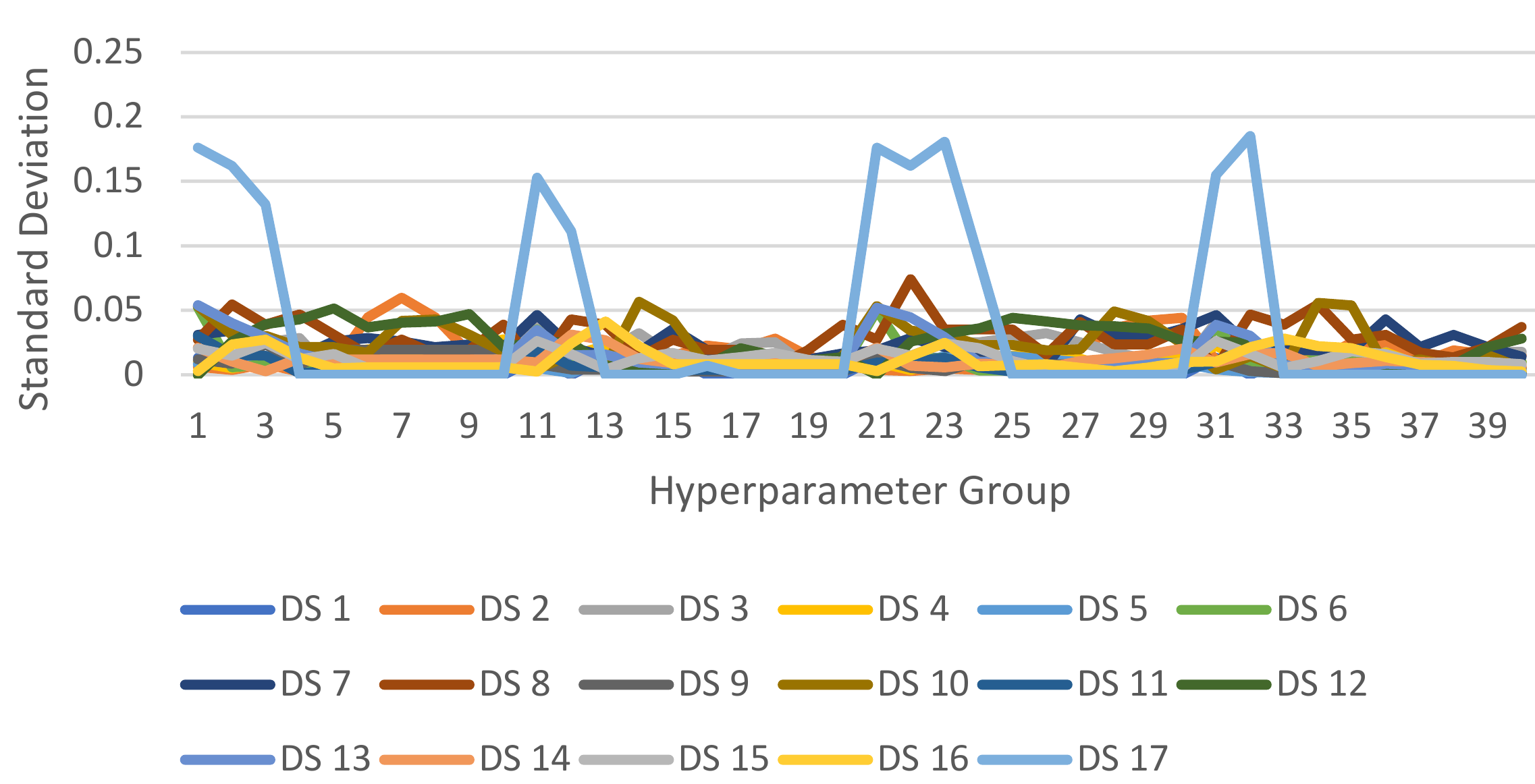}}
	\subfigure[shapelet num]{\includegraphics[width=0.49\linewidth]{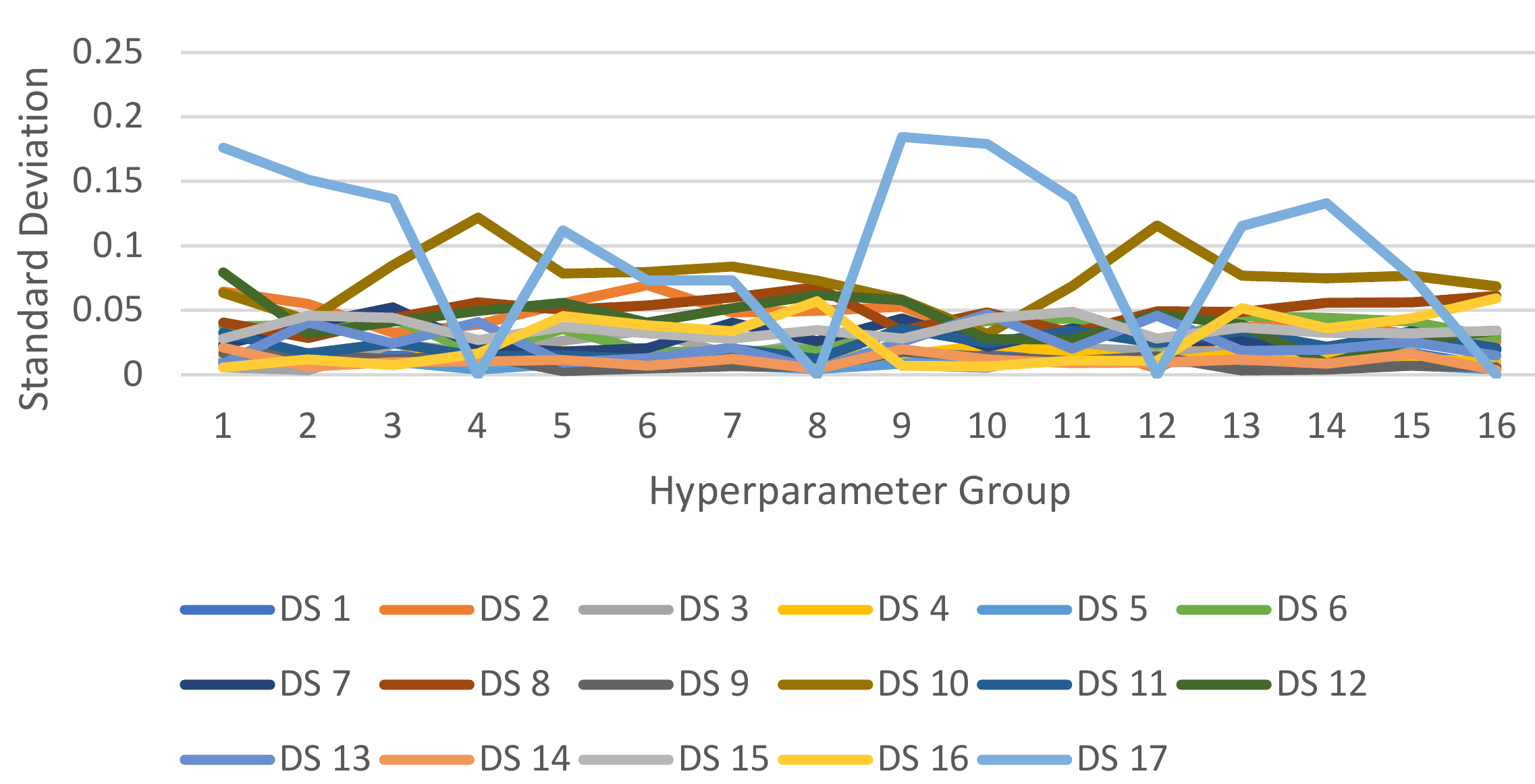}}
	\caption{Hyperparameter analysis}
		\label{overlap}
\end{figure*}

At first glance, the overall results are haphazard. But in fact,  some observations can be found. 
Firstly, the sensitivity of  SIST to hyperparameters differs on different data sets. 
Secondly,  the sensitivity of  SIST to one type of hyperparameter is strongly affected by the setting of  other three types of hyperparameters. 
Thirdly, the rank of the sensitivity of  SIST to the four types of hyperparameters may also differ on different data sets. 
Note that the third discovery is different from the first discovery. 
The first result indicates that the property of data sets has influences on the sensitivity of  SIST to the four types of hyperparameters, 
and the third result indicates that the influences of data sets on the sensitivity of  SIST to hyperparameters are not even or consistent on the four types of hyperparameters.

From the above observation, 
the sensitivity of  SIST to a single type of hyperparameter is affected by 
both the data set and the setting of  other three types of hyperparameters. 
Now, we analyze the influences of the hyperparameter from a higher perspective. 
We try discarding the factor of the data set and the other types of hyperparameters 
and exploring the overall sensitivity of  SIST to each type of hyperparameters. 
For each type of hyperparameter, we count the relative frequency of the standard deviation largger than some threshold in Figure \ref{overlap}. 
For instance, for the hyperparameter ‘shapelet length’, 
there are $17$ data sets and $80$ groups of settings of  other three types of hyperparameters. 
Therefore, there are $17 \times 80=1360$ standard deviation points. 
Then, for each threshold, we count the proportion of the points larger than the threshold in all the $1360$ points. 
Finally, we plot each proportion together in a polyline. The result is shown in the following figure. 

\begin{figure}[htp]
	\centering
	\includegraphics[width=0.8\linewidth]{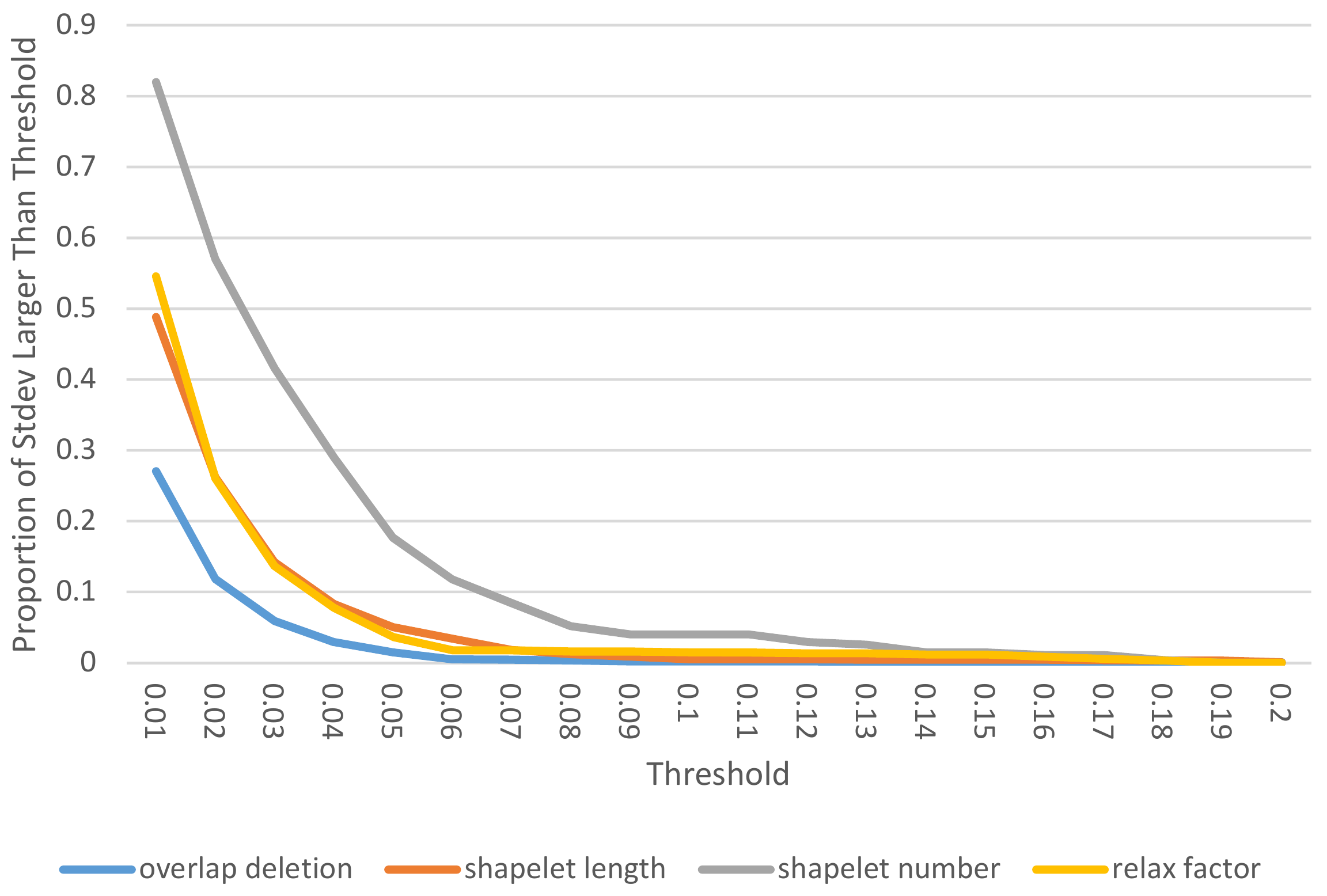}
	\caption{The sensitivity of the SIST to the four types of hyperparameters}
	\label{threshold}
\end{figure}

The result in Figure \ref{threshold}  indicates that  SIST is the most sensitive to `shapelet number’ in the four types of hyperparameters, 
and the most insensitive one is `overlap deletion’. 
For one type of hyperparameter, we can roughly regard the standard deviation 
as an average fluctuation range of the accuracy of  SIST on different choices of the type of hyperparameter. 
Then some information can be derived from some key points in the figure. 
The data located at $0.05$ on the Threshold axis show that there are nearly less than $5\%$ cases 
with an accuracy fluctuation range larger than $5\%$ 
when the changing hyperparameter is `overlap deletion’, 
`shapelet length’, or `relax factor’. 
The data located at $0.1$ on the Threshold axis show that 
there are almost no cases with an accuracy fluctuation range larger than $10\%$ 
when the changing hyperparameter is `overlap deletion’, `shapelet length’, or `relax factor’. 
Therefore, the influence of the hyperparameter `shapelet number’ on  SIST 
may be stronger than  other three types of hyperparameters. 

\begin{figure*}[htp]
	\centering
	\subfigure[overlap deletion]{\includegraphics[width=0.49\linewidth]{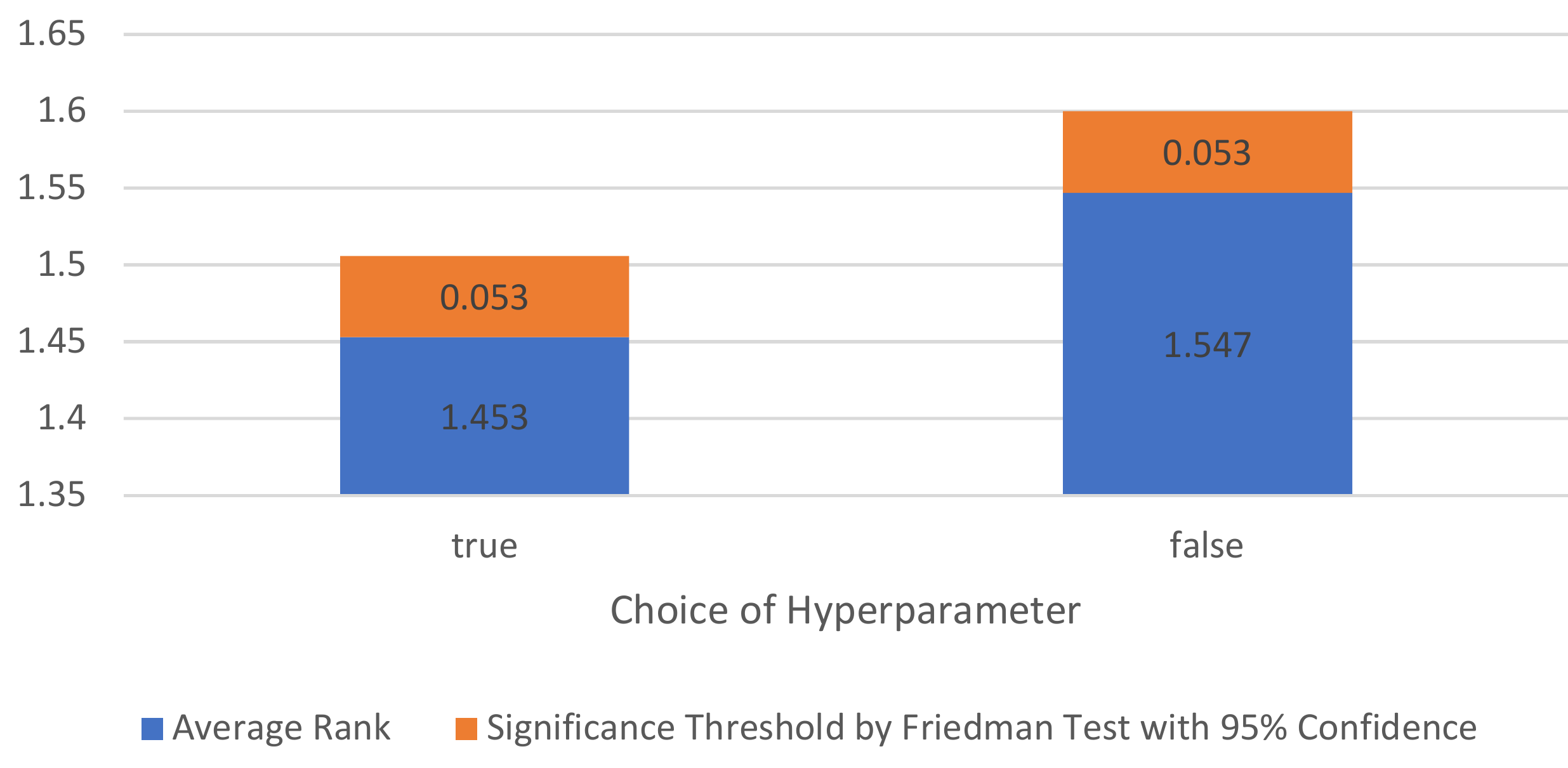} }
	\subfigure[shapelet length]{\includegraphics[width=0.49\linewidth]{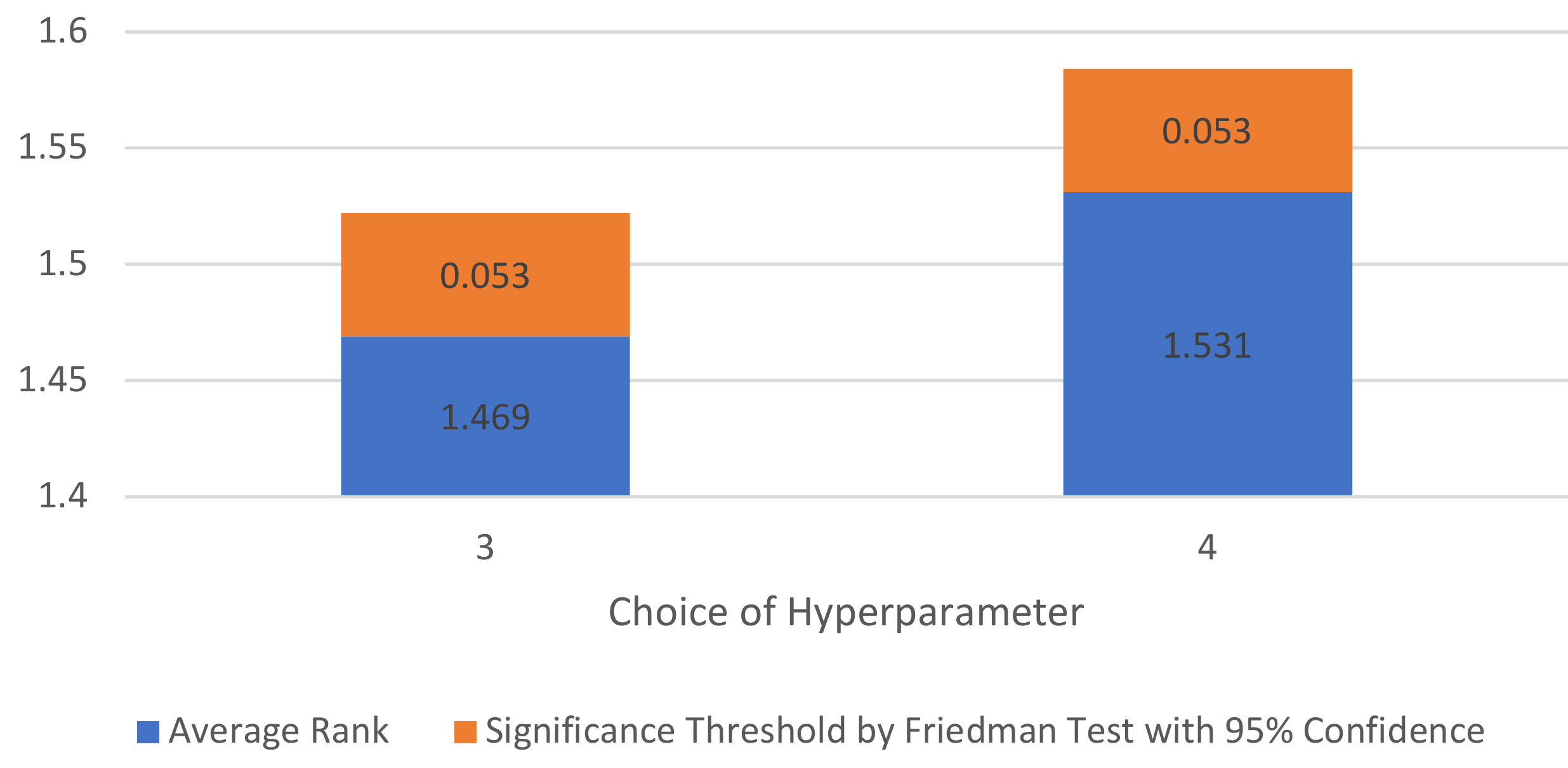}}
	\subfigure[relax factor]{\includegraphics[width=0.49\linewidth]{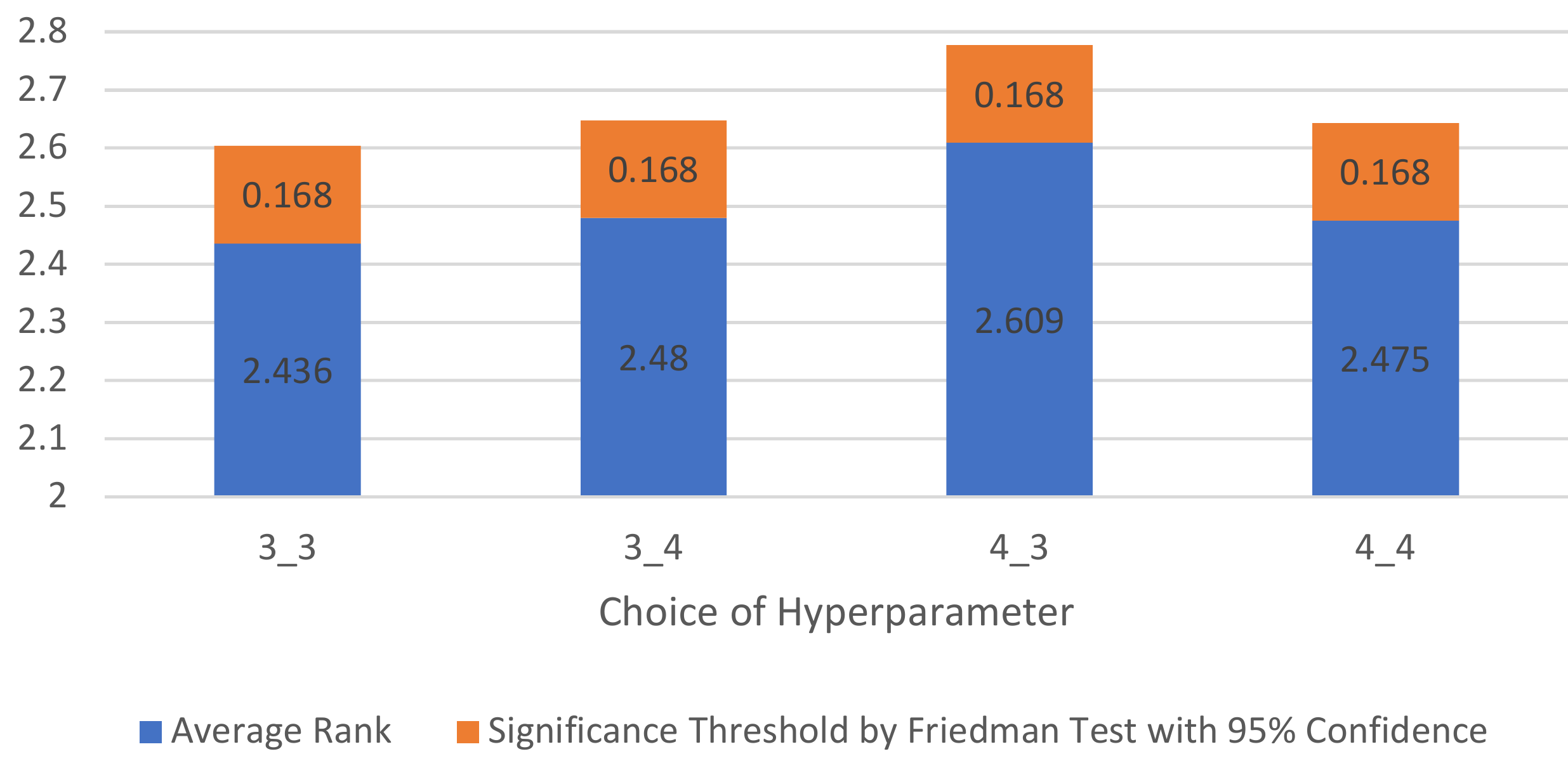}}
	\subfigure[shapelet num]{\includegraphics[width=0.49\linewidth]{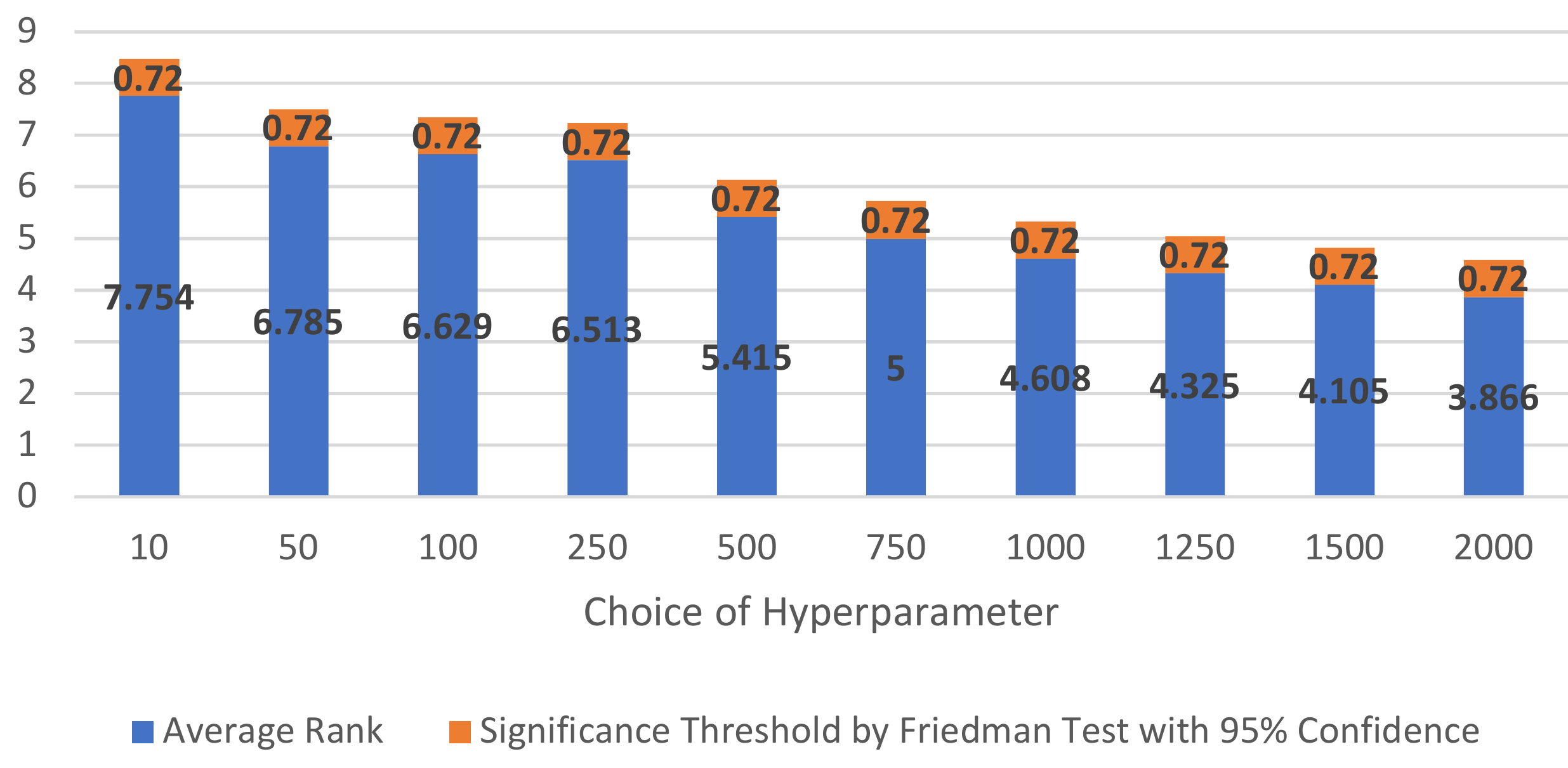}}
	\caption{The Friedman test for the preference of the choices of hyperparameters}
		\label{friedman-test}
\end{figure*}

Another observation is about the preference of  SIST to some choices of each type of hyperparameter. 
We carry out the comparison experiment on each choice for each type of hyperparameter. 
For instance, for the hyperparameter `shapelet number’, there are $17$ data sets and 
$2 \times 2 \times 4=16$ settings of  other three types of hyperparameters.
Therefore, 
there are $17 \times 16=272$ cases; For each case, there are $10$ choices of the `shapelet number’, 
and we can get the accuracy for each choice. 
By ranking these accuracies, we can further get a rank of each choice in each case. 
Finally, we calculate the average rank of each choice on all the $272$ cases and depict it in Figure \ref{friedman-test}. 
We check the significance of the preference of  SIST on some choices by the Friedman test with a $95\%$ confidence. 
Then we can check the preference of  SIST on any pair of choices of one type of hyperparameter 
by comparing the difference of their average ranks with the corresponding Friedman threshold. 
Some obvious results are concluded as follows. On the hyperparameter `overlap deletion’, 
 SIST prefers `deleting overlap shapelets’ to `not deleting overlap shapelets’; On the hyperparameter `shapelet length’, 
 SIST prefers `shapelet length $3$’ to `shapelet length $4$’; On the hyperparameter `relax factor’, 
 SIST prefers `$3$_$3$’, `$3$_$4$’, and `$4$_$4$’ to `$4$_$3$’ 
(here, `$a$_$b$’ means that setting left relax factor as $a$ and right relax factor as $b$ in the calculation of the Relaxed Fixed Distance.), 
and there is no significant preference among `$3$_$3$’, `$3$_$4$’, and `$4$_$4$’, 
and there is also no significant preference among `$3$_$4$’, `$4$_$3$’, and `$4$_$4$’. 
On the hyperparameter `shapelet number’, the situation is a little complex since there are $10$ choices of the hyperparameter, 
but one can easily check the preference of  SIST on any pair of two choices. 
A manifest result is that  SIST may prefer more shapelets in general. 
However, please note that these results are derived in an overall view, 
and when it comes to some specific data set and hyperparameter setting, the results may not hold. 

\subsection{Ablation Experiments}

\begin{figure}[htp]
	\centering
	\includegraphics[width=0.8\linewidth]{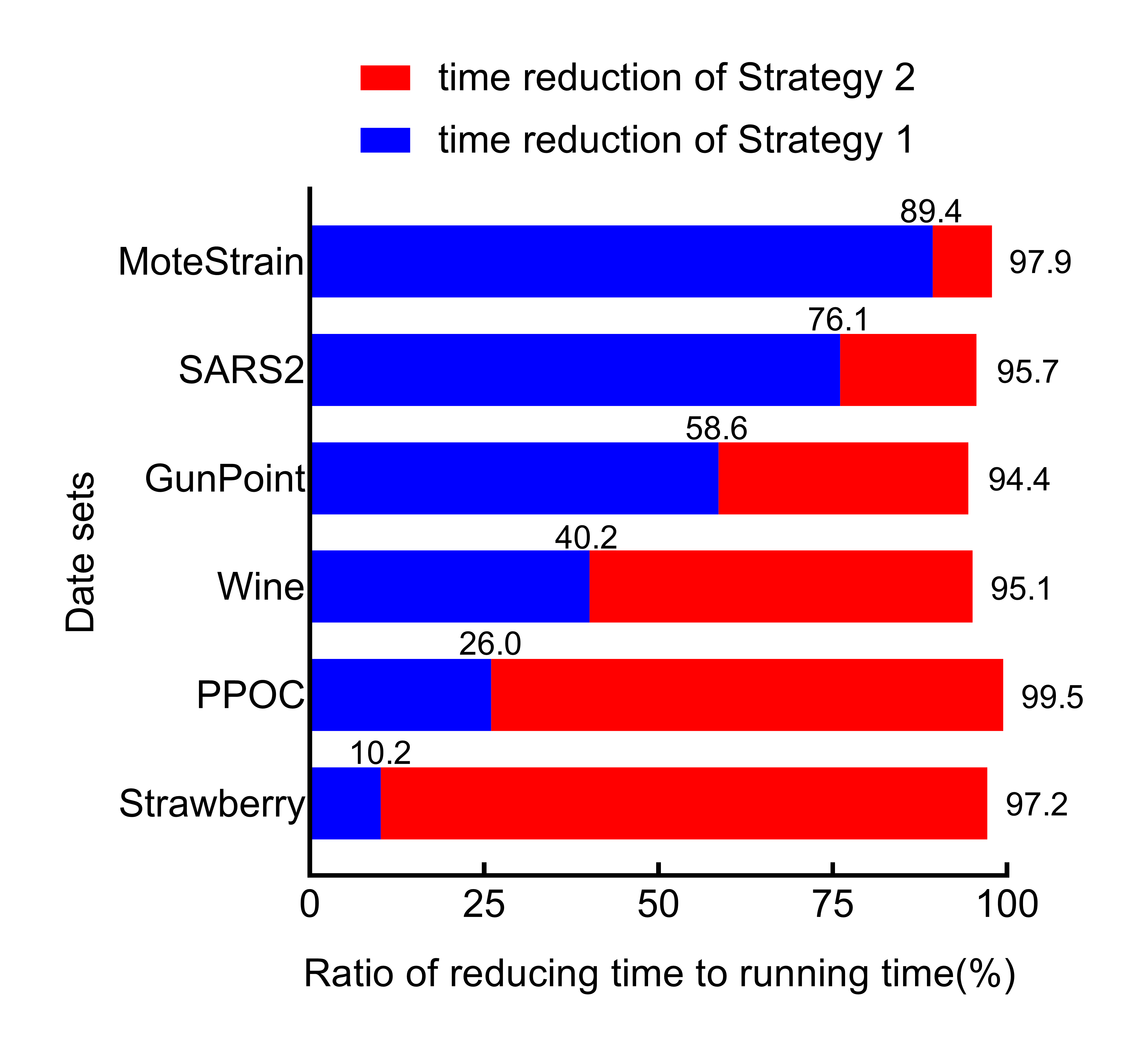}
	\caption{Ablation experiments. The amount of the total bar is the proportion of the total saving time in the running time of the ensemble ST.
		The blue bar corresponds to the proportion of the saving time in the first period while the red bar corresponds to the saving time in the second period.
		SARS2: SonyAIBORobotSurface2. PPOC: ProximalPhalanxOutlineCorrect.}
	\label{ablation}
\end{figure}

To further verify the contribution of each strategy on time reduction, 
ablation experiments are carried out on some data sets, and the results are depicted in Figure \ref{ablation}.
As previously mentioned, the two proposed strategies are used to reduce the time complexity of the ensemble ST algorithm. 
The first strategy embodies its efficiency in the shapelet extraction and feature space construction period. 
The second strategy takes effect in the classifier training period. 
And the two periods exactly compose the whole training process of the ST algorithm. 
Here, the running time of the ensemble ST on each strategy is compared with  SIST to indicate the effectiveness of the other strategy.

For different data sets, the absolute quantity of time reduction may differ in magnitude. 
Hence, the relative quantity of time reduction is used in the result for  more convenient comparison. 
In Figure \ref{ablation}, the amount of the total bar is the proportion of the total saving time in the running time of the ensemble ST. 
The blue bar corresponds to the proportion of the saving time in the first period while the red bar corresponds to the saving time in the second period. 
The data sets in this figure are arranged in the order of their scales. 
From the results, it is apparent that the two strategies are powerful in reducing the running time, 
since they together reduce $94.4\%\sim99.5\%$ running time of the ensemble ST on each data set. 
As the scale of the data set increases, Strategy 2 contributes more to time reduction than Strategy 1. 
Therefore, in the small scale data sets, Strategy 1 plays a major role in time reduction, whereas Strategy 2 is the key to time reduction in the large scale data sets. 
And when the data set is in a middle scale, the contributions of these two strategies are approximately the same.

\section{Conclusion and Future Work}
\label{section:conclusion}
In this paper, we have proposed an novel algorithm, short isometric shapelet transform (SIST), to solve the high time complexity problem of the ensemble  shapelet transform algorithm. 
Two strategies applied in SIST have been guaranteed by the theoretical evidences that shows SIST reduces the time complexity with the near-lossless accuracy.
Furthermore, the empirical experiments demonstrate the effectiveness of the proposed algorithm. 
Supported by the theoretical evidences, the two strategies adopted by SIST actually have generality in algorithms based on the shapelet transform. 
Hence, the future work is trying to apply these two strategies to more scenarios such as multi-class classification \cite{Lyu-2019-mpcvm}, semi-supervised learning and so forth. 
In the future, theoretical evidences will also be further explored to improve the theory  of these two strategies in SIST.

% BibTeX users please use one of
\bibliographystyle{spbasic}      % basic style, author-year citations
\bibliography{reference}

\end{document}